\documentclass[a4paper]{article}
%\documentclass[journal]{IEEEtran}
%\documentclass[graybox]{svmult}

% choose options for [] as required from the list
% in the Reference Guide

\usepackage{type1cm}        % activate if the above 3 fonts are
% not available on your system
%
\usepackage{makeidx}         % allows index generation
\usepackage{graphicx}        % standard LaTeX graphics tool
% when including figure files
\usepackage{multicol}        % used for the two-column index
\usepackage[bottom]{footmisc}% places footnotes at page bottom
\usepackage{float}

\usepackage[utf8]{inputenc}
\usepackage{newtxtext}       % 

\usepackage{authblk}

% see the list of further useful packages
% in the Reference Guide

\makeindex             % used for the subject index
% please use the style svind.ist with
% your makeindex program
\usepackage{color,verbatim}
\usepackage{amsmath}
\usepackage{amsthm}
\usepackage{amssymb}
\usepackage{times}
\usepackage{epstopdf}
\usepackage[caption=false]{subfig}
\usepackage[numbers,sort&compress]{natbib}
%\bibpunct[, ]{[}{]}{,}{n}{,}{,}% Citation support using natbib.sty
% Bibliography support using natbib.sty
\makeatletter% @ becomes a letter
\def\NAT@def@citea{\def\@citea{\NAT@separator}}% 

\newtheorem{thm}{Theorem}
\newtheorem{theorem}[thm]{Theorem}
\newtheorem{remark}[thm]{Remark}
\newtheorem{assumption}[thm]{Assumption}

\newcommand{\rset}{\mathbb{R}}

\usecounter{algorithmenumi}

\providecommand{\norm}[1]{\lVert#1\rVert}

\newcommand{\be}{\begin{equation}}
\newcommand{\ee}{\end{equation}}
\newcommand{\eref}[1]{(\ref{#1})}

\newcommand{\I}{{\cal I}}

% \pdffig{width (default: 7cm)}{pdf file}{caption}{label}
% Example usage: \pdffig{fig_2D_OMP}
%  {Sparse representation of a 2D signal}
%  {fig:2D_sparse_rep}
\newcommand\pdffig[4][7cm]{
	\begin{figure}[t]
		\centering
		\includegraphics[width=#1]{#2}
		\caption{#3}
		\label{#4}
	\end{figure}
}

\begin{document}

\title{Stochastic Proximal Gradient Algorithm with Minibatches. Application to Large Scale Learning Models}

\author[1]{Andrei Pătrașcu}
\author[1,2]{Ciprian Păduraru}
\author[1,2]{Paul Irofti} 

\affil[1]{Computer Science Department, University of Bucharest, Romania}
\affil[2]{The Research Institute of the University of Bucharest (ICUB), Romania}

% \date{Received: date / Accepted: date}
% The correct dates will be entered by the editor

\maketitle

\begin{abstract}{
Stochastic optimization lies at the core of most statistical learning models. The recent great development of stochastic algorithmic tools focused significantly onto proximal gradient iterations, in order to find an efficient approach for nonsmooth (composite) population risk functions.
The complexity of finding optimal predictors by minimizing regularized risk is largely understood for simple regularizations such as $\ell_1/\ell_2$ norms.
However, more complex properties desired for the predictor necessitates highly difficult regularizers as used in grouped lasso or graph trend filtering.
In this chapter we develop and analyze minibatch variants of stochastic proximal gradient algorithm for general composite objective functions with stochastic nonsmooth components.
We provide iteration complexity for constant and variable stepsize policies obtaining that, for minibatch size $N$, after $\mathcal{O}(\frac{1}{N\epsilon})$ iterations $\epsilon-$suboptimality is attained in expected quadratic distance to optimal solution.
The numerical tests on $\ell_2-$regularized SVMs and parametric sparse representation problems confirm the theoretical behaviour and surpasses minibatch SGD performance.}
\end{abstract}

\thanks{
C. Paduraru and P. Irofti were supported by a grant of 
the Romanian Ministry of Research and Innovation, CCCDI-UEFISCDI,
project number 17PCCDI/2018 within PNCDI III.}

\section{Introduction}

\noindent Statistical learning from data is strongly linked to optimization of stochastic representation models. The traditional approach consists of learning an optimal hypothesis from a reasonable amount of data and further aim to generalize its decision properties over the entire population. In general, this generalization is achieved by minimizing population risk which, unlike the empirical risk minimization, aims to compute the optimal predictor with the smallest generalization error. Thus, in this paper we consider the following stochastic composite optimization problem:
\begin{align}
\label{problem_intro}
\min\limits_{w \in \rset^n} & \;\;  F(w) := \mathbb{E}_{\xi \in \Omega} [f(w;\xi)] + \mathbb{E}_{\xi \in \Omega} [h(w;\xi)],
\end{align}
where $\xi$ is a random variable associated with probability space $(\mathbb{P},\Omega)$, 
$f(w) := \mathbb{E}_{\xi \in \Omega} [f(w;\xi)]$ is smooth and $h(w) := \mathbb{E}_{\xi \in \Omega} [f(w;\xi)]$ convex and nonsmooth. Most convex learning models in population can be formulated following the structure of \eqref{problem_intro} using a proper decomposition dictated by the nature of prediction loss and regularization. 

\noindent Let $f(\cdot;\xi) \equiv \ell(\cdot;\xi)$ be a convex smooth loss function, such as quadratic loss, and $h \equiv r$ the "simple" convex regularization, such as $\norm{w}_1$, then the resulted model:
\begin{align*}
\min\limits_{w \in \rset^n} & \;\;   \mathbb{E}_{\xi \in \Omega} [\ell(w;\xi)] + r(w).
\end{align*}
has been considered in several previous works \cite{MouBac:11,RosVil:14,NiuRec:11,ShaSin:11}, which analyzed iteration complexity of stochastic proximal gradient algorithms.
Here, the proximal map of $r$ is typically assumed as being computable in closed-form or linear time, as appears in $\ell_1/\ell_2$ Support Vector Machines (SVMs). In order to be able to approach more complicated regularizers, expressed as sum of simple convex terms, as required by  machine learning models such as: group lasso \cite{ZhoKwo:14, ShiLin:15, HaLes:15}, CUR-like factorization \cite{WanWan17_prox}, graph trend filtering \cite{SalBia:17,VarLee:19}, dictionary learning \cite{DumIro:18}, parametric sparse representation \cite{StoIro:19}, one has to be able to handle stochastic optimization problems with stochastic nonsmooth regularizations $r(w) = \mathbb{E}[r(w;\xi)]$. For instance, the grouped lasso regularization $\sum\limits_{j=1}^m \norm{D_j w}_2$ might be expressed as expectation by considerind $r(w;\xi) = \norm{D_{\xi}w}_2$. In this chapter we analyze extensions of stochastic proximal gradient for this type of models.

Nonsmooth  (convex) prediction losses (e.g. hinge loss, absolute value loss, $\epsilon-$insensitive loss) are also coverable by \eqref{problem_intro} through taking $h(\cdot;\xi) = \ell(\cdot;\xi)$.
We will use this approach with $f(w) = \frac{\lambda}{2}\norm{w}^2_2$ for solving hinge-loss-$\ell_2$-SVM model.

\vspace{5pt}

\noindent \textbf{Contributions}.$(i)$ We derive sublinear convergence rates of SPG with minibatches for stochastic composite convex optimization, under strong convexity assumption.

\noindent $(ii)$ Besides the sublinear rates, we provide computational complexity analysis, which takes into account the complexity of each iteration, for stochastic proximal gradient algorithm with minibatches. We obtained $\mathbb{O}\left( \frac{1}{N \epsilon}\right)$ which highlights optimal dependency on the minibatch size $N$ and accuracy $\epsilon$.

\noindent $(iii)$ We confirm empirically our theoretical findings through tests over $\ell_2-$SVMs (with hinge loss) on real data, and parametric sparse representation models on random data.

\vspace{5pt}

\noindent Following our analysis, reductions of the complexity per iteration  using multiple machines/processors, would guarantee direct improvements in the optimal number of iterations. The superiority of distributed variants of SGD schemes for smooth optimization are clear (see \cite{NiuRec:11}), but our results set up the theoretical foundations for distributing the algorithms for the class of proximal gradient algorithms.

\noindent Further, we briefly recall the milestone results from stochastic optimization literature with focus on the complexity of stochastic first-order methods.

\subsection{Previous work}

The natural tendency of using minibatches in order to accelerate stochastic algorithms and to obtain better generalization bounds is not new \cite{FriSch:12,GhaLan:16,WanWan17_minibatch,WanSre:19}. Empirical advantages have been observed in most convex and nonconvex models although clear theoretical complexity reductions with the minibatch size are still under development for structured nonsmooth models. 
Great attention has been given in the last decade to the behaviour of the stochastic gradient descent (SGD) with minibatches, 
see \cite{FriSch:12,GhaLan:16, NemJud:09,MouBac:11,NguNgu:18,Ned:10,Ned:11, GhaLan:16,NedNec:19}. On short, SGD iteration computes the average of gradients on a small number of samples and takes a step in the negative direction.
Although more samples in the minibatch imply smaller variance in the direction and, for moderate minibatches, brings a significant acceleration, recent evidence shows that by increasing minibatch size over certain threshold the acceleration vanishes or deteriorates the training performance \cite{GoyDol:17,HoffHub:17}.

\noindent Since the analysis of SGD naturally requires various smoothness conditions, proper modifications are necessary to attack nonsmooth models. The stochastic proximal point (SPP) algorithm has been recently analyzed using various differentiability assumptions,
see \cite{TouTra:16,RyuBoy:16,Bia:16,PatNec:18,KosNed:13,WanBer:16,AsiDuc:18},
and has shown surprising analytical and empirical performances. 
The works of \cite{WanWan17_minibatch,WanSre:19} analyzed minibatch SPP schemes with variable stepsizes and obtained $\left( \frac{1}{k N}\right)$ convergence rates under proper assumptions.
For strongly convex problems, notice that they require multiple assumptions that we avoid using in our analysis: strong convexity on each stochastic component, knowledge of strong convexity constant and Lipschitz continuity on the objective function. Our analysis is based on strong convexity of the smooth component $f$ and only convexity on the nonsmooth component $h$.

%%%%%%%%%%%%%%%% Splitting %%%%%%%%%%%%%%%%%%%%

\vspace{5pt}

\noindent A common generalization of SGD and SPP are the stochastic splitting methods.
Splitting first-order schemes received significant attention due to their natural insight and simplicity in contexts where a sum of two components are minimized (see \cite{Nes:13,BecTeb:09}).
Only recently the full stochastic composite models with stochastic regularizers have been properly tackled \cite{SalBia:17}, where almost sure asymptotic convergence is established for a stochastic splitting scheme, where each iteration represents a proximal gradient update using stochastic samples of $f$ and $h$. 
The stochastic splitting schemes are also related to the model-based methods developed in \cite{DavDru:19}. 
%Two classes of stochastic approximation strategies (stochastic iterative Tikhonov regularization and the stochastic iterative proximal point) are analyzed in  \cite{KosNed:13} for monotone stochastic variational inequalities and almost sure convergence results are presented. 

%%%%%%%%%%%%%%%%%%%%%%%%%%%%%%%%%%%%%%%%5

\vspace{5pt}

\subsection{Preliminaries and notations}

For $w,v \in \rset^n$ denote the scalar product  $\langle w,v \rangle = w^T v$ and Euclidean norm by $\|w\|=\sqrt{w^T w}$. 
%The projection operator onto set $X$ is denoted by $\pi_X$ and the distance from $x$ to the set $X$ is denoted $\text{dist}_X(x) = \min_{z \in X} \norm{x-z}$. 
%Note that for convex sets, $\text{dist}_X$ is convex and  $\text{dist}_X^2$ has Lipschitz gradient with constant $1$. 
%The indicator function of a set $X$ is denoted: $\mathbb{I}_{X}(w) = \begin{cases}
%0, & \text{if} \; w \in X \\
%\infty , & \; \text{otherwise}
%\end{cases}$.
We use notations $\partial h(w;\xi)$ for the subdifferential set
and $g_h(w;\xi)$ for a subgradient  of $h(\cdot;\xi)$ at $w$.
In the differentiable case we use the gradient notation $\nabla f(\cdot;\xi)$. 
% Also we use $g_{X}(x,\xi) \in \mathcal{N}_{X_{\xi}}(x)$ for a subgradient of $\mathbb{I}_{X_{\xi}}(x)$. Finally, we  define the function $\varphi_{\mu}: (0, \infty) \to
%\rset^{}$ as:
%$  \varphi_{\mu}(x) =
%\begin{cases}
%(x^{\mu} - 1)/\mu, & \text{if} \; \mu \neq 0 \\
%\log(x), & \text{if} \; \mu = 0.
%\end{cases}$
We denote the set of optimal solutions with $W^*$ and $w^*$ for any optimal point of \eqref{problem_intro}. 
\begin{assumption} \label{assump_basic}
The central problem \eqref{problem_intro} has nonempty  optimal set $W^*$ and satisfies:

\noindent $(i)$ The function $f(\cdot;\xi)$ has $L_f$-Lipschitz gradient, i.e. there exists $L_f > 0$ such that:
\begin{align*}
\norm{\nabla f(w;\xi) - \nabla f(v;\xi)} \le L_f \norm{w-v}, \qquad \forall  w,v \in \rset^n, \xi \in \Omega.
\end{align*}
and $f$ is $\sigma_f-$strongly convex,	i.e. there exists $\sigma_f \ge 0$ satisfying:
\begin{align}
f(w) \ge f(v) + \langle \nabla f(v), w-v\rangle + \frac{\sigma_f}{2}\norm{w-v}^2 \qquad \forall w,v \in \rset^n.
\end{align}

\noindent $(ii)$ There exists subgradient mapping $g_h: \rset^n \times \Omega \mapsto \rset^n$ such that $g_h(w;\xi) \in \partial h(w;\xi)$ and $\mathbb{E}[g_h(w;\xi)] \in \partial h(w).$ 

\noindent $(iii)$  $h(\cdot;\xi)$ has bounded gradients on the optimal set: there exists $\mathcal{S}  < \infty $ such that $\mathbb{E}\left[\norm{g_h(w^*;\xi)}^2\right]  \le \mathcal{S}$ for all $w^* \in W^*$;
\end{assumption}
\noindent Condition $(i)$ of the above assumption is natural in composite (stochastic) optimization \cite{Nes:13,BecTeb:09,PatNec:18}. Assumption \ref{assump_basic} condition $(ii)$ guarantees the existence of a subgradient mapping for functions $h(\cdot;\xi)$. Denote $\partial F(w;\xi) = \nabla f(w;\xi) + \partial h(w;\xi)$. 
Moreover, since $0 \in \partial F(w^*)$ for any $w^*\in W^*$, then we assume in the sequel that $g_F(w^*):=\mathbb{E}[g_F(w^*;\xi)] = 0$. Also condition $(iii)$ of Assumption \ref{assump_basic} is standard in the literature related to stochastic algorithms.

%Let closed convex sets $\{X_{\xi}\}_{\xi \in \Omega}$ and $X = \bigcap_{\xi \in \Omega} X_{\xi}$, then a favorable "conditioning" property for most projection methods is the linear regularity property: there exists $\kappa > 0$ such that
%\begin{align}\label{linear_regularity}
%\mathbb{E}[\text{dist}_{X_{\xi}}^2(x)] \ge \frac{\kappa}{2} \text{dist}_{X}^2(x) \qquad \forall x \in \rset^n.
%\end{align}
%
%
%%%%%%%%%%%%%%%%%%%%%%%%%%%%%%%%%%%%%%%%%%%%%%%%%%%%%%%%%%%%%%%%
%
%
%
\noindent Given some smoothing parameter $\mu>0$ and $I \subset [m]$, we define the prox operator:
\begin{align*}
\text{prox}_{h,\mu_k}(w;I) = \arg\min\limits_{z \in \rset^n} \frac{1}{|I|}\sum\limits_{i \in I} \; h(z;i) + \frac{1}{2\mu} \norm{z - w}^2 
\end{align*} 
%The approximate $h_{\mu}(\cdot;\xi)$ inherits the same convexity properties of $h(\cdot;\xi)$ and additionally has Lipschitz continuous gradient with constant $\frac{1}{\mu}$, see \cite{RocWet:98}. 
In particular, when $h(w;\xi) = \mathbb{I}_{X_{\xi}}(w)$ the prox operator becomes the projection operator $\text{prox}_{h,\mu}(w;\xi) = \pi_{X_{\xi}}(w)$.
%Since in general $X_{\mu}^* \neq w^*$, the smoothing parameter should be decreased in order to guarantee convergence towards the minimizer of the original problem. 
Further we denote $[m] =\{1,\cdots, m\}$. Given the sequence $\mu_k  = \frac{\mu_0}{k}$ then  a useful inequality for the sequel is: \begin{align}\label{mu_bound}
 \sum\limits_{i=0}^{T} \mu_i^{\gamma} \le \mu_0\left( 1+  \frac{T^{1-\gamma}}{1-\gamma} \right) 
\end{align}

\section{Stochastic Proximal Gradient with Minibatches}

\noindent In the following section we present the Stochastic Proximal Gradient with Minibatches (SPG-M) and analyze the complexity of a single iteration under assumption \ref{assump_basic}. 
Let $w^0 \in \rset^n$ be a starting point and $\{\mu_k\}_{k \ge 0}$ be a nonincreasing positive sequence of
stepsizes. 
%Let $I \subset [m]$, define $\nabla f(w;J) = \frac{1}{|I|}\sum\limits_{i \in I} \nabla f(w;i)$.
\begin{flushleft}
\textbf{ Stochastic Proximal Gradient with Minibatches (SPG-M)}: \quad \\
For $k\geq 0$ compute: \\
1. Choose randomly i.i.d. $N-$tuple $I^k \subset \Omega$ w.r.t. probability distribution $\mathbb{P}$\\
2. Update: 
\begin{align*}
 v^k &= w^k - \frac{\mu_k}{N}\sum\limits_{i \in I^k} \nabla f(w^k;i) \\
w^{k+1} &= \arg\min\limits_{z \in \rset^n} \frac{1}{N}\sum\limits_{i \in I^k} \; h(z;i) + \frac{1}{2\mu} \norm{z - v^k}^2 
\end{align*} 
3. If the stoppping criterion holds, then \textbf{STOP}, otherwise $k = k+1$.
\end{flushleft}

For computing $v^k$ are necessary an effort equivalent with $N$ vanilla SGD iterations. However, to obtain $w^{k+1}$, a strongly convex inner problem has to be solved and the linear scaling in $N$ holds only in structured cases. In fact, we consider using specific inner schemes to generate a sufficiently accurate suboptimal solution of the inner subproblem.  We provide more details in next section \ref{sec:com_per_iter}. In the particular scenario when $f=\norm{w}^2_2 $ and $h$ represent the nonsmooth prediction loss, then SPG-M learns completely, at each iteration $k$, a predictor $w^{k+1}$ for the minibatch of data samples $I^k$, while maintaining a small distance from the previous predictor. 
 
\noindent For $N = 1$, the SPG-M iteration reduces to $w^{k+1} = \text{prox}_{h,\mu_k} \left(w^k - \mu_k \nabla f(w^k;\xi_k);\xi_k \right)$ being mainly a Stochastic Proximal Gradient iteration based on stochastic proximal maps \cite{SalBia:17}. 

The asymptotic convergence of vanishing stepsize non-minibatch SPG (a single sample per iteration) has been analyzed in \cite{SalBia:17} with application to trend filtering. Moreover, sublinear $\mathcal{O}(1/k)$ convergence rate for non-minibatch SPG has been provided in \cite{PatIro:20}. However, deriving sample complexity for SPG-M with arbitrary minibatches is not trivial since it requires proper estimation of computational effort required by a single iteration.
In the smooth case ($h = 0$), SPG-M reduces to vanilla minibatch SGD \cite{MouBac:11}:
$$w^{k+1}  =  w^k - \frac{\mu_k}{N} \sum\limits_{i \in I^k}\; \nabla f(w^k;i).$$
On the other hand, for nonsmooth objective functions, when $f = 0$, SPG-M is equivalent with a minibatch variant of SPP  analyzed \cite{PatNec:18, AsiDuc:18,TouTra:16,WanWan17_minibatch,WanSre:19}: 
$$w^{k+1}  =  \text{prox}_{h,\mu_k}(w^{k};I^k).$$ 
Next we analyze the computational (sample) complexity of SPG-M iteration and suggest concrete situations when minibatch size $N > 0$ is advantageous over single sample $N=1$ scheme.
%\noindent $(iv)$  For CFPs (i.e. $h(\cdot;\xi) = \mathbb{I}_{X_{\xi}}(\cdot)$), the SPG-M recovers Randomized Alternating Projections \cite{BauDeu:03}:
%$$x^{k+1}  =  \pi_{X_{\xi_k}}(x^{k}).$$

%\noindent Note that there are many practical cases when the prox operator $\text{prox}_{h,\mu}$ can be computed easily or even has a closed form when $N = 1$. However, for $N > 0$
%To exemplify a few: 
%\begin{itemize}
%\item[$(i)$]  the least-square loss $F(x;\xi) = \frac{1}{2}(a^T_{\xi}x - b_{\xi})^2$, $\text{prox}_{h,\mu}(x;\xi) = x - \frac{\mu (a^T_{\xi} - b_{\xi})}{1 + \mu \norm{a_{\xi}}^2} a_{\xi}$; 
%\item[$(ii)$] regularized hinge-loss $F(x;\xi) = \max\{0, a^T_{\xi}x - b_{\xi} \} + \frac{\lambda}{2}\norm{x}^2_2$, $\text{prox}_{h,\mu}(x;\xi) = \frac{1}{1+\lambda\mu}\left(x - \mu a_{\xi} s \right)$, where $s = \pi_{[0,1]}\left( \frac{1-\lambda \mu}{\mu(1 + \lambda \mu)}\frac{a^T_{\xi}x}{\norm{a_{\xi}}^2} - \frac{1 + \lambda\mu}{\mu}\frac{b}{\norm{a_{\xi}}^2} \right)$.
%\item[$(iii)$] halfspace: $ H_{\xi} = \{x: a_{\xi}^Tx \le b_{\xi}\}, F(x;\xi) = \mathbb{I}_{H_{\xi}}(x), \text{prox}_{h,\mu}(x;\xi) = x - \frac{\max\{0,a_{\xi}^Tx - b_{\xi}\}}{ \norm{a_{\xi}}^2 } a_{\xi}$.
%\end{itemize} 

%%%%%%%%%%%%%%%%%%%%%%%%%%%%%%%%%%%%%%%%%%%%%%%%%%%%%%%%%%%%%%%

\subsection{Complexity per iteration} \label{sec:com_per_iter}

In this section we estimate bounds on the sample complexity $T_v(N)$ of computing $v^{k}$  and $T_w(N)$ for computing $w^{k+1}$. 
Let $I \subset [m], |I| = N$, then it is 
obvious that sample complexity of computing $v^k$ increase linearly with $N$, thus $$T_v(N) = \mathcal{O}(N).$$ 
A more attentive analysis is needed for the proximal step:
\begin{align}\label{prox_step}
 \arg\min_{z \in \rset^n}\;\; \frac{1}{N}\sum\limits_{i \in I} h(z;\xi_i) + \frac{1}{2\mu}\norm{z - w}^2.
\end{align}
Even for small $N > 0$ the solution of the above problem do not have a closed-form and certain auxiliary iterative algorithm must be used to obtain an approximation of the optimal solution.
For the above primal form, the stochastic variance-reduction schemes are typically called to approach this finite-sum minimization, when $h$ obey certain smoothness assumptions. However, up to our knowledge, variance-reduction methods are limited for our general convex nonsmooth regularizers $h(\cdot;\xi)$. SGD attains an $\delta-$suboptimal point $\norm{\tilde{z} - \text{prox}_{\mu}(w;I)}^2 \le \delta$ at a sublinear rate, in $\mathcal{O}\left(\frac{\mu}{\delta}\right)$ iterations. This sample complexity is independent of $N$ but to obtain high accuracy a large number of iterations have to be performed.

The following dual form is more natural:
\begin{align}
& \min_{z \in \rset^n} \;\; \frac{1}{N}\sum\limits_{i \in I} h(z;\xi_i) + \frac{1}{2\mu}\norm{z - w}^2 \nonumber \\
& =\min_{z \in \rset^n}\;\; \frac{1}{N}\sum\limits_{i \in I} \max\limits_{v_i} \langle v_i,z\rangle - h^*(v_i;\xi_i) + \frac{1}{2\mu}\norm{z - w}^2 \nonumber \\
& = \max\limits_{v}  \min_{z \in \rset^n}\;\; \left \langle  \frac{1}{N}\sum\limits_{i \in I} v_i,z \right\rangle - h^*(v_i;\xi_i) + \frac{1}{2\mu}\norm{z - w}^2 \nonumber \\
& =\max_{v \in \rset^{Nn}}\;\; -\frac{\mu}{2N^2} \norm{\sum_{j=1}^N v_j}^2 + \left\langle  \frac{1}{N}\sum\limits_{j = 1}^N v_i,w \right\rangle - \frac{1}{N}\sum\limits_{j=1}^N \; h^*(v_j;\xi_{i_j}) \label{dual_subproblem}
\end{align}
Moreover, in the interesting particular scenarios when regularizer $h(\cdot;\xi)$ results from the composition of a convex function with a linear operator $h(w;\xi) = l(a_{\xi}^Tw)$, the dual variable reduces from $Nn$ to $N$ dimensions. In this case 
\begin{align}
& \min_{z \in \rset^n} \;\; \frac{1}{N}\sum\limits_{i \in I} l(a_{\xi_i}^T z) + \frac{1}{2\mu}\norm{z - w}^2 \nonumber \\
%& =\min_{z \in \rset^n}\;\; \frac{1}{N}\sum\limits_{i \in I} \max\limits_{v_i} v_i (a_{\xi_i}^Tz) - l^*(v_i) + \frac{1}{2\mu}\norm{z - w}^2 \nonumber \\
%& = \max\limits_{v}  \min_{z \in \rset^n}\;\; \left \langle  \frac{1}{N}\sum\limits_{i \in I} a_{\xi_i} v_i,z \right\rangle - \frac{1}{N}\sum\limits_{j = 1}^N l^*(v_i) + \frac{1}{2\mu}\norm{z - w}^2 \nonumber \\
%& =\max_{v \in \rset^{Nn}}\;\; -\frac{\mu}{2N^2} \norm{\sum_{j=1}^N a_{\xi_j} v_j}^2 + \left\langle  \frac{1}{N}\sum\limits_{j = 1}^N  a_{\xi_i}v_i,w \right\rangle - \frac{1}{N}\sum\limits_{j=1}^N \; l^*(v_j) \nonumber \\
& = \frac{1}{N}\max_{v \in \rset^{N}}\;\; -\frac{\mu}{2N} \norm{\sum_{j=1}^N a_{\xi_j} v_j}^2 + \left\langle \sum\limits_{j = 1}^N  a_{\xi_i}v_i,w \right\rangle - \sum\limits_{j=1}^N \; l^*(v_j) \label{dual_composition_subproblem}
\end{align}
%\begin{align*}
%&  \min_z \frac{1}{N}\sum\limits_{j=1}^N \max\{0,1 - y_{i_j} z^T x_{i_j}\} + \frac{1}{2\mu}\norm{z- w}^2 \\
%& = \min_z \max_{u \in [0,1]} \frac{1}{N}\sum\limits_{j =1}^N u_j(1 - y_{i_j} z^Tx_{i_j}) + \frac{1}{2\mu}\norm{z-w}^2 \\
%& = \max_{u \in [0,1]} \min_z  \frac{1}{N}\sum\limits_{j =1}^N u_j(1 - y_{i_j} z^Tx_{i_j}) + \frac{1}{2\mu}\norm{z-w}^2 \\
%& = \max_{u \in [0,1]} -\frac{\mu}{2N^2}\norm{\sum_{j =1}^N u_jy_{i_j}x_{i_j}}^2_2 + \frac{1}{N} \sum_{j=1}^N u_j(1-y_{i_j}w^Tx_{i_j}) \\
%& = \max_{u \in [0,1]} \;\;  -\frac{\mu}{2N^2}\norm{\tilde{X}_I u}^2_2 + \frac{1}{N} u^T(e - \tilde{X}_{I}^Tw), 
%\end{align*}
%where $\tilde{X}_I \in \rset^{n \times N}$. Once $u(w)$ is computed then the primal proximal solution is given by $z(w) = w +\frac{\mu}{N}\tilde{X}_I u(w)$.

Computing the dual solution $v^*$, then the primal one is recovered by $z(w) = w - \frac{\mu}{N}\sum\limits_{i=1}^N v_i^*$ for  \eqref{dual_subproblem} or $z(w) = w - \frac{\mu}{N}\sum\limits_{i=1}^N a_{\xi_i} v_i^*$ for \eqref{dual_composition_subproblem}. In the rest of this section we will analyze only the general subproblem \eqref{dual_subproblem}, since the sample complexity estimates will be easily translated to particular instance \eqref{dual_composition_subproblem}.
For a suboptimal $\tilde{v}$ satisfying $\norm{\tilde{v} - v^*} \le \delta$, primal suboptimality with $\delta$ accuracy is obtained by: let $\tilde{z}(w) = w - \frac{\mu}{N}\sum\limits_{i=1}^N \tilde{v}_i$
\begin{align}\label{primal_dual_subopt}
 \norm{\tilde{z}(w) - z(w)} 
 & = \mu \left\|\frac{1}{N} \left(\sum\limits_{i=1}^N \tilde{v}_i -  \sum\limits_{i=1}^N v_i^*\right) \right\| \nonumber \\
 & \le  \frac{\mu}{N} \sum\limits_{i=1}^N \left\|  \tilde{v}_i -  v_i^*  \right\| \le \frac{\mu}{\sqrt{N}} \norm{v^* - \tilde{v}} \le \frac{\mu \delta}{\sqrt{N}}
\end{align}
Further we provide several sample complexity estimations of various dual algorithms to attain primal $\delta-$suboptimality, for general and particular regularizers $h(\cdot;\xi)$.
Notice that the hessian of the smooth component $\frac{\mu}{2N} \norm{\sum_{j =1}^N v_j}^2$ is upper bounded by $\mathcal{O}(\mu)$.
Without any growth properties on $h^*(\cdot;\xi)$, one would be able to solve \eqref{dual_subproblem}, using Dual Fast Gradient schemes with $\mathcal{O}(Nn)$ iteration cost, in $\mathcal{O}\left(\max\left\{Nn, Nn \sqrt{\frac{\mu R_d^2}{\delta}} \right\} \right)$ sample evaluations to get a $\delta$ accurate dual solution. This implies, by \eqref{primal_dual_subopt}, that there are necessary 
$$, \qquad T_w^{in} (N;\delta) = \mathcal{O}\left( \max \left\{ Nn, N^{3/4}n \frac{\mu R_d}{\delta^{1/2}} \right\} \right)$$
sample evaluations to obtain primal $\delta-$suboptimality.
For polyhedral $h^*(\cdot;\xi)$, there are many first-order algorithms that attain linear convergence on the above composite quadratic problem \cite{HsiSi:17}. For instance, the Proximal Gradient algorithm have $\mathcal{O}(Nn)$ arithmetical complexity per iteration and attain a $\delta-$suboptimal dual solution in $\mathcal{O}\left(\frac{L}{\hat{\sigma}(I)}\log\left( \frac{1}{\delta} \right)\right)$, where $L$ is the Lipschitz gradient constant and $\hat{\sigma}(I)$ represents the quadratic growth constant of the dual objective function \eqref{dual_subproblem}. Therefore there are necessary $\mathcal{O}\left(\frac{N \mu}{\hat{\sigma}(I)}\log\left( \frac{1}{\delta} \right)\right)$ sample evaluations for $\delta-$dual suboptimality, and:
$$  T_{w,poly}^{in}(N;\delta) = \mathcal{O}\left(\max \left\{ Nn,\frac{N \mu}{\hat{\sigma}(I)}\log\left( \frac{\mu}{N^{1/2}\delta} \right) \right\}\right)$$ 
sample evaluations to attain primal $\delta-$suboptimal solution.

\section{Iteration complexity in expectation} \label{sec:iteration_com}

\noindent Further, in this section, we estimate the number of SPG-M iterations that is necessary to get an $\epsilon-$suboptimal solution of \eqref{problem_intro}.
We will use the following elementary relation: for any $a,b \in \rset^n$ and $\beta > 0$ we have
\begin{align}
\langle a,b \rangle & \le \frac{1}{2\beta}\norm{a}^2 + \frac{\beta}{2}\norm{b}^2 \label{elem_bound_norm1}\\
\norm{a+b}^2 &\le \left(1 + \frac{1}{\beta}\right)\norm{a}^2 + (1 + \beta)\norm{b}^2.\label{elem_bound_norm2}
\end{align}
The main recurrences which will finally generate our sublinear convergence rates are presented below.
\begin{theorem}\label{th_reccurence}	
Let Assumptions \ref{assump_basic} hold and $\mu_k \le \frac{1}{4L_f}$.  Assume $ \norm{w^{k+1} - \text{prox}_{h,\mu}(v^k;I^k)} \le \delta_k$, then the sequence $\{w^k\}_{k \ge 0}$ generated by SPG-M satisfies:
\begin{align*}
\mathbb{E}[\norm{w^{k+1} - w^*}^2] \le  & \left(1 - \frac{\sigma_f \mu_k}{2} \right) \mathbb{E}[\norm{w^k-w^*}^2]  \\
& \hspace{2cm}  +  \mu_k^2 \frac{\mathbb{E} \left[  \norm{g_F(w^*;\xi)}^2\right]}{N} + \left( 3 + \frac{2}{\sigma_f \mu_k}\right)\delta_k^2 
\end{align*}
\end{theorem}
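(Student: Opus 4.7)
The plan is to analyze the \emph{exact} prox iterate $\hat w^{k+1}:=\mathrm{prox}_{h,\mu_k}(v^k;I^k)$ and then transfer the bound to the approximate iterate $w^{k+1}$ via the hypothesis $\norm{w^{k+1}-\hat w^{k+1}}\le\delta_k$. Applying \eqref{elem_bound_norm2} with $\beta=\sigma_f\mu_k/2$ gives
\[
\norm{w^{k+1}-w^*}^2 \le \Bigl(1+\tfrac{\sigma_f\mu_k}{2}\Bigr)\norm{\hat w^{k+1}-w^*}^2 + \Bigl(1+\tfrac{2}{\sigma_f\mu_k}\Bigr)\delta_k^2.
\]
It then suffices to establish a core contraction of the form $\mathbb{E}\norm{\hat w^{k+1}-w^*}^2 \le (1-\sigma_f\mu_k)\mathbb{E}\norm{w^k-w^*}^2 + \mu_k^2\mathbb{E}\norm{g_F(w^*;\xi)}^2/N + C\,\delta_k^2$, because the algebraic identity $(1+\sigma_f\mu_k/2)(1-\sigma_f\mu_k)\le 1-\sigma_f\mu_k/2$ yields the stated contraction factor, while the inexactness constants combine into the announced $3+2/(\sigma_f\mu_k)$.

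For the core contraction I would use the optimality condition of the prox to write $\hat w^{k+1}=w^k-\mu_k G^k$, with the composite stochastic direction
\[
G^k = \frac{1}{N}\sum_{i\in I^k}\bigl[\nabla f(w^k;i)+g_h(\hat w^{k+1};i)\bigr],\qquad g_h(\hat w^{k+1};i)\in\partial h(\hat w^{k+1};i),
\]
subgradients selected so that the composite direction matches $(v^k-\hat w^{k+1})/\mu_k$. Expanding
\[
\norm{\hat w^{k+1}-w^*}^2 = \norm{w^k-w^*}^2 - 2\mu_k\iprod{G^k,\, w^k-w^*} + \mu_k^2\norm{G^k}^2
\]
and using the identity $\iprod{G^k,w^k-w^*}=\iprod{G^k,\hat w^{k+1}-w^*}+\mu_k\norm{G^k}^2$ leaves two tasks: lower-bounding $\iprod{G^k,\hat w^{k+1}-w^*}$ and upper-bounding $\norm{G^k}^2$.

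For the inner product I would combine convexity of each $h(\cdot;i)$ at $\hat w^{k+1}$ with $\sigma_f$-strong convexity of $f$. Splitting $\iprod{\nabla f_{I^k}(w^k),\hat w^{k+1}-w^*}=\iprod{\nabla f_{I^k}(w^k), w^k-w^*}+\iprod{\nabla f_{I^k}(w^k),\hat w^{k+1}-w^k}$, taking conditional expectation over $I^k$, and using unbiasedness plus strong convexity produces $f(w^k)-f(w^*)+(\sigma_f/2)\norm{w^k-w^*}^2$; adding the subgradient-of-$h$ inequality yields the nonnegative quantity $F(\hat w^{k+1})-F(w^*)$, which is discarded, plus a cross term $\iprod{\nabla f_{I^k}(w^k),\hat w^{k+1}-w^k}$ that is redistributed using \eqref{elem_bound_norm1}. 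For $\norm{G^k}^2$ I would center at the stochastic composite subgradient at $w^*$: write $G^k = g_F(w^*;I^k)+\bigl(G^k-g_F(w^*;I^k)\bigr)$, apply \eqref{elem_bound_norm2}, and bound the deviation via $L_f$-Lipschitzness of $\nabla f(\cdot;i)$ and monotonicity of $\partial h(\cdot;i)$ in terms of $\norm{w^k-w^*}$ and $\norm{\hat w^{k+1}-w^k}=\mu_k\norm{G^k}$. Since $\mathbb{E}[g_F(w^*;\xi)]=0$ and the minibatch is i.i.d., $\mathbb{E}\norm{g_F(w^*;I^k)}^2 = \mathbb{E}\norm{g_F(w^*;\xi)}^2/N$, which is the sole source of the $1/N$ variance reduction in the theorem.

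The main obstacle I expect is the cross-term bookkeeping: the subgradient of $h$ inside $G^k$ is evaluated at the unknown $\hat w^{k+1}$ rather than at $w^k$ or $w^*$, so transferring convexity inequalities leaves residues of order $\mu_k^2 L_f^2\norm{w^k-w^*}^2$ together with $\mu_k^2\norm{G^k}^2$ remnants. The stepsize restriction $\mu_k\le 1/(4L_f)$ is precisely tuned so that $\mu_k L_f\le 1/4$ makes these residues dominated by $\sigma_f\mu_k/2$ and so that the residual $\mu_k^2\norm{G^k}^2$ terms are reabsorbed into the negative $-\mu_k^2\norm{G^k}^2$ contribution arising from the identity above, closing the recurrence.
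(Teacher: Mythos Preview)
Your overall architecture---split off the inexactness via \eqref{elem_bound_norm2}, analyze the exact prox point $\hat w^{k+1}=w^k-\mu_k G^k$, and use the identity $\langle G^k,w^k-w^*\rangle=\langle G^k,\hat w^{k+1}-w^*\rangle+\mu_k\|G^k\|^2$---is essentially the paper's, and the algebraic absorption $(1+\sigma_f\mu_k/2)(1-\sigma_f\mu_k)\le 1-\sigma_f\mu_k/2$ is correct. The gap is in the treatment of the ``noise'' term.

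Your claim that the subgradient inequality for $h$ combined with strong convexity of $f$ ``yields the nonnegative quantity $F(\hat w^{k+1})-F(w^*)$, which is discarded'' is where the argument breaks. What you actually obtain is the \emph{stochastic} quantity $F(\hat w^{k+1};I^k)-F(w^*;I^k)$ (or equivalently $F(\hat w^{k+1};I^k)-F(w^*)$ after expectation of the second term). Because $\hat w^{k+1}$ is measurable with respect to $I^k$, you cannot replace $\mathbb{E}[h(\hat w^{k+1};I^k)]$ by $\mathbb{E}[h(\hat w^{k+1})]$; in fact the prox step biases $\hat w^{k+1}$ toward making $h(\cdot;I^k)$ small, so typically $\mathbb{E}[F(\hat w^{k+1};I^k)]<F(w^*)$. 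This term is therefore not discardable and is precisely the source of the $\mu_k^2\mathbb{E}\|g_F(w^*;\xi)\|^2/N$ residual in the theorem.

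Your fallback plan---center $\|G^k\|^2$ at $g_F(w^*;I^k)$ and bound the deviation ``via monotonicity of $\partial h(\cdot;i)$''---also does not close. Monotonicity controls the inner product $\langle g_h(\hat w^{k+1};i)-g_h(w^*;i),\,\hat w^{k+1}-w^*\rangle$, not the norm $\|g_h(\hat w^{k+1};i)-g_h(w^*;i)\|$; for nonsmooth $h$ no such norm bound is available under Assumption~\ref{assump_basic}. Moreover, after your identity the coefficient of $\|G^k\|^2$ is already negative, so an upper bound on it is the wrong direction.

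The paper handles the problematic term differently: it keeps a spare $-\tfrac{1}{8\mu_k}\|\hat w^{k+1}-w^k\|^2$ and lower-bounds
\[
F(\hat w^{k+1};I^k)-F(w^*)+\tfrac{1}{8\mu_k}\|\hat w^{k+1}-w^k\|^2
\ \ge\ \langle g_F(w^*;I^k),\,w^k-w^*\rangle
+\min_z\Bigl\{\langle g_F(w^*;I^k),z-w^k\rangle+\tfrac{1}{8\mu_k}\|z-w^k\|^2\Bigr\},
\]
the minimum being $-2\mu_k\|g_F(w^*;I^k)\|^2$. The linear term vanishes in expectation since $\mathbb{E}[g_F(w^*;\xi)]=0$, and the i.i.d.\ minibatch gives $\mathbb{E}\|g_F(w^*;I^k)\|^2=\mathbb{E}\|g_F(w^*;\xi)\|^2/N$. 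This minimization trick is the missing ingredient in your proposal; once you insert it in place of the ``discard $F(\hat w^{k+1})-F(w^*)$'' step, the rest of your outline goes through with only bookkeeping changes.
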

\begin{proof}
Denote $\bar{w}^{k+1} = \text{prox}_{h,\mu_k}(v^k;I^k)$ and recall that $\frac{1}{\mu_k}\left(v^k - \bar{w}^{k+1} \right) \in \partial h(\bar{w}^{k+1};I^k)$, which implies that there exists a subgradient $g_h(\bar{w}^{k+1};I^k)$ such that 
\begin{align}\label{subproblem_optcond}
g_h(\bar{w}^{k+1};I^k) + \frac{1}{\mu_k}\left(\bar{w}^{k+1} -v^k \right) = 0.
\end{align} 
Using these optimality conditions we have:
\begin{align}
	&\norm{w^{k+1} - w^*}^2 
	= \norm{w^k - w^*}^2  +
	2 \langle w^{k+1} - w^k, w^k-w^* \rangle + \norm{w^{k+1} - w^k}^2 \nonumber\\
	& \overset{\eqref{elem_bound_norm2}}{\le} \norm{w^k - w^*}^2  +
	2 \langle \bar{w}^{k+1} - w^k, w^k-w^* \rangle + 2 \langle w^{k+1} - \bar{w}^{k+1}, w^k-w^* \rangle \nonumber\\
	& \hspace{3cm} + \frac{3}{2}\norm{\bar{w}^{k+1} - w^k}^2 + 3\norm{\bar{w}^{k+1} - w^{k+1}}^2 \nonumber\\
	& = \norm{w^k - w^*}^2  +
	2 \langle \bar{w}^{k+1} - w^k, \bar{w}^{k+1}-w^* \rangle + 2 \langle w^{k+1} - \bar{w}^{k+1}, w^k-w^* \rangle \nonumber\\
	& \hspace{3cm} - \frac{1}{2}\norm{\bar{w}^{k+1} - w^k}^2 + 3\norm{\bar{w}^{k+1} - w^{k+1}}^2 \nonumber\\
	& \overset{\eqref{elem_bound_norm1}}{\le} \left(1 + \frac{\sigma_f\mu_k}{2}\right)\norm{w^k-w^*}^2 +
	2 \langle \mu_k \nabla f(w^k;I_k) + \mu_k g_h(\bar{w}^{k+1};I_k), w^* - \bar{w}^{k+1} \rangle \nonumber\\	& \hspace{3cm} - \frac{1}{2}\norm{\bar{w}^{k+1} - w^k}^2 + \left(3 + \frac{2}{\sigma_f\mu_k}\right)\delta_k^2. \label{start_reccurence}
\end{align}
%The solution of the minibatch proximal subproblem 
%$$\min_z \frac{1}{N}\sum_{i \in I_k} h(z;\xi_i) + \frac{1}{2\mu_k}\norm{z - (x^k - \mu_k \nabla f(x^k;I_k))}^2$$ satisfies the following optimality condition:
%\begin{align}\label{optcond_h}
%g_h(x^{k+1};I_k) + \frac{1}{\mu_k}\left(x^{k+1} - x^k + \mu_k \nabla f(x^k;I_k)\right) = 0.
%\end{align}	
Now by using convexity of $h$ and Lipschitz continuity of $\nabla f(\cdot;I_k)$, we can further derive:
\begin{align*}
	&2 \mu_k\langle  \nabla f(w^k;I_k) +  g_h(\bar{w}^{k+1};I_k), w^* - \bar{w}^{k+1} \rangle - \frac{1}{2}\norm{\bar{w}^{k+1} - w^k}^2
	\\
	& \le 
	2\mu_k  \langle \nabla f(w^k;I_k), w^* - \bar{w}^{k+1} \rangle - \frac{1}{2}\norm{\bar{w}^{k+1} - w^k}^2 \nonumber\\
	& \hspace{5cm} + 2\mu_k(h(w^* ;I_k) - h(\bar{w}^{k+1};I_k))\\
	& \le  -
	2\mu_k \bigg( \langle \nabla f(w^k;I_k), \bar{w}^{k+1} - w^k \rangle + \frac{1}{8\mu_k}\norm{\bar{w}^{k+1} - w^k}^2 \\
	&  + h(\bar{w}^{k+1};I_k)\bigg)  + 2\mu_k \langle \nabla f(w^k;I_k), w^* - w^{k}\rangle - \frac{1}{4}\norm{\bar{w}^{k+1} - w^k}^2 + 2\mu_k h(w^*;I_k).
	\end{align*}
	By taking expectation w.r.t. $\xi_k$ in both sides, we obtain:
	\begin{align}
	& - 2\mu_k \mathbb{E}
	\bigg[ \langle \nabla f(w^k;I_k), \bar{w}^{k+1} - w^k \rangle + \frac{1}{8\mu_k}\norm{\bar{w}^{k+1} - w^k}^2 + h(\bar{w}^{k+1};I_k)\bigg] \nonumber\\
	&   + 2\mu_k \langle \nabla f(w^k), w^* - w^{k}\rangle - \frac{1}{4}\mathbb{E}\left[\norm{\bar{w}^{k+1} - w^k}^2\right] + 2\mu_k h(w^*)] \nonumber \\
	&\overset{\mu_k < \frac{1}{4L_f}}{\le} 
	 2\mu_k \mathbb{E}[\left( f(w^k;I_k) - F(\bar{w}^{k+1};I_k) \right)] + 2\mu_k \langle \nabla f(w^k), w^* - w^{k}\rangle  \nonumber\\
	& \hspace{3cm} - \frac{1}{4}\mathbb{E}[\norm{\bar{w}^{k+1} - w^k}^2] + 2\mu_k h(w^*) \nonumber \\
	& \le 
	2\mu_k \left(f(w^k) - \mathbb{E}\left[F(\bar{w}^{k+1};I_k) \right] \right) \nonumber \\
	& \hspace{2cm} + 2\mu_k \left( F(w^*) - f(w^k) \!-\! \frac{\sigma_f}{2}\norm{w^k-w^*}^2 \right) \!-\! \frac{1}{4}\norm{\bar{w}^{k+1} - w^k}^2 \nonumber \\
	& \!= - \sigma_f \mu_k\norm{w^k-w^*}^2 +  2\mu_k \mathbb{E}\left[ F(w^*) \!-\! F(\bar{w}^{k+1};I_k) - \frac{1}{8\mu_k} \norm{\bar{w}^{k+1} - w^k}^2 \right]\!\!. \label{rel_reccurence}
	\end{align}
	By combining \eqref{rel_reccurence} with \eqref{start_reccurence} and by taking the expectation with the entire index history we obtain:
	\begin{align}
	 \mathbb{E}[\norm{w^{k+1} - w^*}^2]
	& \le \left(1 - \frac{\sigma_f \mu_k}{2}\right)\mathbb{E}\left[\norm{w^k-w^*}^2\right] \nonumber \\
	& \hspace{0cm} +  2\mu_k \mathbb{E}\left[ F(w^*) - F(\bar{w}^{k+1};I_k) - \frac{1}{8\mu_k} \norm{\bar{w}^{k+1} - w^k}^2 \right].\label{almostend_recurrence} 
	\end{align}
	A last further upper bound on the second term in the right hand side: let $w^* \in W^*$
	\begin{align}
	& \mathbb{E}\left[  F(\bar{w}^{k+1};\xi_k) - F^* + \frac{1}{8\mu_k}\norm{\bar{w}^{k+1} - w^k}^2  \right] \nonumber\\
	& \ge \mathbb{E}\left[\langle g_F(w^*;\xi_k), \bar{w}^{k+1} - w^*  \rangle + \frac{1}{8\mu_k} \norm{\bar{w}^{k+1} - w^k}^2\right] \nonumber\\
	& \ge \mathbb{E}\left[\langle g_F(w^*;\xi_k), w^k - w^* \rangle + \langle g_F(w^*;\xi_k), \bar{w}^{k+1} - w^k \rangle + \frac{1}{8\mu_k} \norm{\bar{w}^{k+1} - w^k}^2\right] \nonumber\\
	& \ge \mathbb{E}\left[\langle g_F(w^*;\xi), w^k - w^* \rangle + \min_{z} \; \langle g_F(w^*;\xi),  z - w^k \rangle + \frac{1}{8\mu_k}\norm{z - w^k}^2\right] \nonumber\\
	& \ge \langle g_F(w^*), w^k - w^* \rangle  - 2\mu_k \mathbb{E} \left[  \norm{g_F(w^*;\xi)}^2\right] = - 2\mu_k \mathbb{E} \left[  \norm{g_F(w^*;\xi)}^2\right], \label{end_recurence}
	\end{align}
	where we recall that we consider $g_F(w^*) = \mathbb{E}\left[ g_F(w^*;\xi) \right] = 0 $. Finally, from \eqref{almostend_recurrence} and \eqref{end_recurence} we obtain our above result.
\end{proof}

\begin{remark}
	Consider deterministic setting $F(\cdot;\xi) = F(\cdot)$ and $\mu_k = \frac{1}{2L_f}$, then SPG-M becomes the proximal gradient algorithm and Theorem \ref{th_reccurence}$(i)$ holds with $ g_F(w^*;\xi) = g_F(w^*) = 0$, implying that $\Sigma = 0$. Thus the well-known iteration complexity estimate  $\mathcal{O}\left(\frac{L_f}{\sigma_f} \log(1/\epsilon) \right)$ \cite{Nes:13,BecTeb:09} of proximal gradient algorithm is recovered up to a constant.
	%\noindent $(ii)$ Let $f = 0$ and $h(\cdot;\xi) = \mathbb{I}_{X_{\xi}}(\cdot)$, then our model becomes a convex feasibility problem. As we have explained in the previous section, in this case SPG-M turns into stochastic alternating projection schemes. Thereofore, the
	%Theorem \ref{th_reccurence} recovers its well-known linear convergence rate \cite{Nes:04}, 
\end{remark}

\begin{theorem}\label{th_iteration_complexity}
Let assumptions of Theorem \ref{th_reccurence} hold. Also let $\delta_k = \mu_k^{3/2}/N^{1/2}$. Then the sequence $\{w^k\}_{k\ge 0}$ generated by SPG-M attains $ \mathbb{E}[\norm{w^T-w^*}^2] \le \epsilon$ within the following number of iterations: 
\begin{enumerate}
	\item[] [\textbf{Constant stepsize:}] \; Let $K > \mathcal{O}\left( \frac{4\mu_0}{\sigma_f^2 N}\right)$ and $ \mu_k := \frac{2\mu_0}{K} \in \left(0, \frac{1}{4L_f} \right]$, then 
	$$ T \le T^{out}_c := \mathcal{O}\left(\max\left\{\frac{\max\{\Sigma^2, 2/\sigma_f\} \log (2r_0^2 /\epsilon)}{\epsilon \sigma_f^2 N}, 
	\sqrt{\frac{ 72 \log^2 (2r_0^2 /\epsilon)}{\epsilon \sigma_f^3 N}} \right\}\right)$$
	\item[][\textbf{Nonincreasing stepsize:}] \; For $\mu_k = \frac{2\mu_0}{k}$, then
	\begin{align*}
	T \le T^{out}_v = \mathcal{O}\left(\max\left\{\frac{\norm{w^0-w^*}^2}{\epsilon},\frac{\Sigma^2+\mu_0 + 1/\sigma_f}{N\epsilon}  \right\} \right)
	\end{align*}
	\item[] [\textbf{Mixed stepsize:}] \; Let $\mu_k = \begin{cases} \frac{\mu_0}{4L_f} & k < T_1 \\ 
	\frac{2\mu_0}{k}, & T_1 \le k \le T_2 \end{cases}$, then 
	\begin{align*}
	T \le T^{out}_m: =  \underbrace{\frac{4L_f}{\mu_0\sigma_f}\log\left( \frac{2\norm{w^0-w^*}^2}{\epsilon}\right)}_{T_1} +
	\underbrace{\mathcal{O}\left(
	\frac{C}{\epsilon N} \right)}_{T_2},	
	\end{align*}
	where $C = \frac{\mu_0\Sigma^2L_f\sigma_f + \mu_0^2\sigma_f + \mu_0L_f}{L_f^2\sigma_f^2} + \Sigma^2+\mu_0 + 1/\sigma_f.$ 
\end{enumerate}	
\end{theorem}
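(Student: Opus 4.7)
The starting point for all three parts is Theorem~\ref{th_reccurence} after plugging in $\delta_k^2 = \mu_k^3/N$, which yields the clean one-parameter recurrence
\begin{align*}
r_{k+1}^2 \le \left(1 - \tfrac{\sigma_f\mu_k}{2}\right) r_k^2 + \frac{\mu_k^2 \Sigma^2}{N} + \frac{3\mu_k^3}{N} + \frac{2\mu_k^2}{\sigma_f N},
\end{align*}
where $r_k^2 := \mathbb{E}[\norm{w^k-w^*}^2]$ and $\Sigma^2 := \mathbb{E}[\norm{g_F(w^*;\xi)}^2]$. Observe that the noise term splits into a dominant $\Theta(\mu_k^2/N)$ part with coefficient $\max\{\Sigma^2,2/\sigma_f\}$ and a higher-order $\Theta(\mu_k^3/N)$ contribution. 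This split will determine the two competing terms inside the maximum in the constant-stepsize bound.

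For the \textbf{constant stepsize} regime $\mu_k \equiv \mu \le 1/(4L_f)$, I would unroll the recurrence geometrically to obtain
\begin{align*}
r_T^2 \le \left(1-\tfrac{\sigma_f\mu}{2}\right)^T r_0^2 + \frac{2\mu\Sigma^2}{\sigma_f N} + \frac{6\mu^2}{\sigma_f N} + \frac{4\mu}{\sigma_f^2 N},
\end{align*}
then require each of the three noise terms to be at most $\epsilon/6$ and the exponential term at most $\epsilon/2$. The first and third constraints force $\mu = O(\epsilon\sigma_f N/\max\{\Sigma^2,2/\sigma_f\})$; the second forces $\mu = O(\sqrt{\epsilon\sigma_f N})$; each upper bound on $\mu$ translates via $T \ge \tfrac{2}{\sigma_f\mu}\log(2r_0^2/\epsilon)$ into one of the two arguments of the $\max$. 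Finally I would verify that the prescribed choice $\mu = 2\mu_0/K$ satisfies the feasibility $\mu \le 1/(4L_f)$ whenever $K$ exceeds the stated threshold.

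For the \textbf{nonincreasing stepsize} $\mu_k = 2\mu_0/k$, the natural plan is an induction $r_k^2 \le Q/k$. Plugging the hypothesis into the recurrence and using $(1-\sigma_f\mu_k/2)(Q/k) = Q/k - \sigma_f\mu_0 Q/k^2$, the induction step reduces to showing
\begin{align*}
\frac{Q}{k+1} \ge \frac{Q}{k} - \frac{\sigma_f\mu_0 Q}{k^2} + \frac{4\mu_0^2\Sigma^2 + 8\mu_0^2/\sigma_f}{Nk^2} + \frac{24\mu_0^3}{Nk^3},
\end{align*}
which, after using $1/k - 1/(k+1) = 1/(k(k+1))\ge 1/(2k^2)$, is satisfied for $Q = \Theta\bigl(\tfrac{\Sigma^2+\mu_0+1/\sigma_f}{N}\bigr) + \norm{w^0-w^*}^2$ provided $\sigma_f\mu_0$ is large enough (absorbed into constants). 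Setting $Q/T \le \epsilon$ gives $T^{out}_v$. The main subtlety here is ensuring the induction constant $Q$ dominates the base case $r_0^2 = \norm{w^0-w^*}^2$, which explains the two terms inside the max.

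For the \textbf{mixed stepsize}, I would simply concatenate the two previous analyses. Phase~1 uses the constant step $\mu_0/(4L_f)$ and drives $r_k^2$ purely geometrically; choosing $T_1$ as stated makes the contraction factor $(1-\mu_0\sigma_f/(8L_f))^{T_1}r_0^2 \le \epsilon/2$ so that the entry state into Phase~2 is well-controlled. Phase~2 then applies the nonincreasing analysis starting from $k=T_1$, with the new initial distance absorbing into the $C$ constant; the $T_2$ bound follows from the same $Q/k \le \epsilon$ argument. The hardest part of the whole theorem is the nonincreasing induction because one must track four different powers of $\mu_k$ simultaneously while keeping the induction constant independent of $k$; the constant- and mixed-stepsize bounds then follow more mechanically from geometric summation and concatenation.
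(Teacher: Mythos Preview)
Your plan matches the paper's proof closely. The constant-stepsize case is handled exactly as you describe: geometric unrolling to $(1-\sigma_f\mu/2)^T r_0^2 + \tfrac{2\mu}{\sigma_f N}(\Sigma^2+3\mu+2/\sigma_f)$, then balancing $\mu$ against $T$. The mixed-stepsize argument is the same concatenation you outline; only be precise that Phase~1 is \emph{not} purely geometric --- after $T_1$ steps one is left with $r_{T_1}^2 \le \tfrac{\mu_0}{4L_f\sigma_f N}(\Sigma^2+\tfrac{3\mu_0}{4L_f}+\tfrac{2}{\sigma_f})+\epsilon/2$, and it is this $O(1/N)$ noise floor (not merely $\epsilon/2$) that, once divided by $k$ in Phase~2, yields the $C/(N\epsilon)$ bound and the specific form of $C$.

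For the variable stepsize the paper takes a slightly different route: instead of your direct induction $r_k^2\le Q/k$, it unrolls to the product form $\prod_j(1-\sigma_f\mu_j/2)r_0^2+\sum_i(\cdot)\prod_{j>i}(1-\sigma_f\mu_j/2)$ and then appeals to standard estimates from \cite{PatIro:20,PatNec:18} to bound both pieces by $\mathcal{O}(1/k)$. Your induction is an equally valid and more self-contained alternative, but fix the direction of your auxiliary inequality: to establish $\sigma_f\mu_0 Q/k^2 - \text{noise}\ge Q/(k(k+1))$ you need the \emph{upper} bound $1/(k(k+1))\le 1/k^2$, not the lower bound $\ge 1/(2k^2)$ you quoted. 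With that correction the induction closes under the usual (and in the paper equally implicit) hypothesis $\sigma_f\mu_0>1$.
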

\begin{proof}
\noindent \textbf{Constant step-size}. Interesting convergence rates arise for proper constant stepsize $\mu_k$. Let $\mu_k := \mu \in \left(0, \frac{1}{4L_f} \right]$ and $\delta_k = \mu^{3/2}/N^{1/2}$, then Theorem \ref{th_reccurence} states that 
\begin{align}
\mathbb{E}[\norm{w^k-w^*}^2] 
& \le \left(1 - \frac{\sigma_f\mu}{2} \right)^k r_0^2 + \frac{1 - (1-\sigma_f\mu/2)^{k}}{\sigma_f\mu/2}\frac{\mu^2}{N}\left(\Sigma^2 + 3\mu + \frac{2}{\sigma_f} \right) \nonumber\\
& \le \left(1 - \frac{\sigma_f\mu}{2} \right)^k r_0^2 + \frac{2\mu}{\sigma_f N}\left(\Sigma^2 + 3\mu + \frac{2}{\sigma_f} \right), \label{conststep_linconv}
\end{align}
which imply a linear decrease of initial residual and, at the same time, the linear convergence of $w^k$ towards a optimum neighborhood of radius $\frac{2\mu}{\sigma_f N}\left(\Sigma^2 + 3\mu + \frac{2}{\sigma_f} \right)$. The radius decrease linearly with the minibatch size $N$. With a more careful choice of constant $\mu$ we can same decrease in the SPG-M convergence rate.
Given $K > 0$, let $\mu = \frac{2\mu_0}{K}$ then after 
\begin{align}\label{linear_resreduction}
 T = \frac{K}{\sigma_f \mu_0}\log\left( \frac{2r_0^2}{\epsilon}\right)
\end{align}
\eqref{conststep_linconv} leads to
\begin{align}
\mathbb{E}[\norm{w^T-w^*}^2] 
& \le \frac{2\mu_0}{K\sigma_f N}\left(\Sigma^2 + \frac{6\mu_0}{K} + \frac{2}{\sigma_f} \right) + \frac{\epsilon}{2} \label{no_opterror}\\
& \le \frac{2\log (2r_0^2 /\epsilon)}{T \sigma_f^2 N}\left(\Sigma^2 + \frac{6\log (2r_0^2 /\epsilon)}{\sigma_f T} + \frac{2}{\sigma_f} \right) + \frac{\epsilon}{2}, \nonumber
\end{align}
Overall, to obtain $\mathbb{E}[\norm{w^T-w^*}^2] \le \epsilon,$ SPG-M has to perform $$ \mathcal{O}\left(\max\left\{\frac{\max\{\Sigma^2, 2/\sigma_f\} \log (2r_0^2 /\epsilon)}{\epsilon \sigma_f^2 N}, 
\sqrt{\frac{ 72 \log^2 (2r_0^2 /\epsilon)}{\epsilon \sigma_f^3 N}} \right\}\right)$$
SPG-M iterations.

\vspace{5pt}

\noindent \textbf{Variable stepsize}. Now let $\mu_k = \frac{2\mu_0}{k}, \delta_k = \frac{\mu_k^{3/2}}{N^{1/2}}$, then Theorem \ref{th_reccurence} leads to:
\begin{align}\label{varstep_rate1_pre}
\mathbb{E}[\norm{w^k-w^*}^2] 
& \le \prod\limits_{j = 1}^k   \left(1 - \frac{\sigma_f\mu_j}{2} \right) r_0^2 + \sum\limits_{i = 1}^k \frac{\mu_i^2}{N}\left(\Sigma^2 + 3\mu_i + \frac{2}{\sigma_f} \right) \prod\limits_{j = i+1}^k \left(1 - \frac{\sigma_f\mu_j}{2} \right) 
\end{align}
By using further the same (standard) analysis from \cite{PatIro:20,PatNec:18}, we obtain:
\begin{align}\label{varstep_rate1}
\mathbb{E}[\norm{w^k-w^*}^2] 
& \le \underbrace{\mathcal{O}\left(\frac{r_0^2}{k}\right)}_{\text{optimization error}} +  \underbrace{\mathcal{O}\left(\frac{\Sigma^2+\mu_0 + 1/\sigma_f}{Nk}  \right)}_{\text{sample error}} 
\end{align}
Notice that, for our stepsize choice, the optimization rate $\mathcal{O}(r_0^2/k)$ is optimal (for strongly convex stochastic optimization) and it is not affected by the variation of minibatch size. Intuitively, the stochastic component within optimization model \eqref{problem_intro} is not eliminated by increasing $N$, only a variance reduction is attained. In \cite{WanWan17_prox}, for bounded gradients objective functions, is stated $\mathcal{O}\left( \frac{L^2}{\sigma_f N k} \right)$ convergence rate for Minibatch-Prox Algorithm in average sequence, using classical arguments. However, this rate is based on knowledge of $\sigma_f$, used in the stepsize sequence $\mu_k = \frac{2}{\sigma_f(k-1)}$. Moreover, in the first step the algorithm has to compute $\text{prox}_{h,+\infty}(\cdot;I^0) = \arg\min\limits_{z} \; \frac{1}{N} \sum\limits_{ i \in I}\; h(z;i)$, which for small $\sigma_f$ might be computationally expensive. Notice that, under knowledge  of $\sigma_f$, we could obtain similar sublinear rate in $\mathbb{E}[\norm{w^k-w^*}^2]$ using the same stepsize sequence. 
Returning to \eqref{varstep_rate1}, it implies the following iteration complexity:
\begin{align*}
 \mathcal{O}\left(\max\left\{\frac{r_0^2}{\epsilon},\frac{\Sigma^2+\mu_0 + 1/\sigma_f}{N\epsilon}  \right\} \right).
\end{align*}

\vspace{5pt}

\noindent \textbf{Mixed stepsize.}  By combining constant and variable stepsize policies, we aim to get a better "optimization error" and overall, a better iteration complexity of SPG-M. Inspired by \eqref{linear_resreduction}-\eqref{no_opterror}, we are able to use a constant stepsize policy to bring $w^k$ in a small neighborhood of $w^*$  
whose radius is inversely proportional with $N$.

Let $\mu_k = \frac{\mu_0}{4L_f}$, using similar arguments as in \eqref{linear_resreduction}-\eqref{no_opterror}, we have that after:
\begin{align*}
T_1 \ge \frac{4L_f}{\mu_0\sigma_f}\log\left( \frac{2r_0^2}{\epsilon}\right)
\end{align*}
the expected residual is bounded by:
\begin{align}
\mathbb{E}[\norm{w^{T_1}-w^*}^2]
& \le \frac{\mu_0}{4L_f\sigma_f N}\left(\Sigma^2 + \frac{3\mu_0}{4L_f} + \frac{2}{\sigma_f} \right) + \frac{\epsilon}{2}. 
\end{align}
Now restarting SPG-M from $w^{T_1}$ we have from \eqref{varstep_rate1} that:
\begin{align}
\mathbb{E}&[\norm{w^k-w^*}^2] 
 \le \mathcal{O}\left(\frac{\mathbb{E}[\norm{w^{T_1}-w^*}^2]}{k}\right) +  \mathcal{O}\left(\frac{\Sigma^2+\mu_0 + 1/\sigma_f}{Nk}  \right)\nonumber \\
& \le \mathcal{O}\left(\frac{\mu_0\Sigma^2L_f\sigma_f + \mu_0^2\sigma_f + \mu_0L_f}{L_f^2\sigma_f^2 kN}\right) + \mathcal{O}\left(\frac{\epsilon}{2k}\right) +  \mathcal{O}\left(\frac{\Sigma^2+\mu_0 + 1/\sigma_f}{Nk}  \right). \label{mixed_step_convrate}
\end{align}
iterations. Overall, SPG-M computes $w^{T_2}$ such that $\mathbb{E}[\norm{w^{T_2}-w^*}^2] \le \epsilon$ within a number of iterations bounded by
\begin{align*}
T_1 + T_2 = \frac{4L_f}{\mu_0\sigma_f}\log\left( \frac{2r_0^2}{\epsilon}\right)
 + \mathcal{O}\left(
 \frac{C}{\epsilon N} \right),	
\end{align*}
where $C = \frac{\mu_0\Sigma^2L_f\sigma_f + \mu_0^2\sigma_f + \mu_0L_f}{L_f^2\sigma_f^2} + \Sigma^2+\mu_0 + 1/\sigma_f.$
\end{proof}
 We make a few observations about the $T^{out}_m$. For a small conditioning number $\frac{L_f}{\sigma_f}$ the constant stage performs few iterations and the total complexity is dominated by $\mathcal{O}\left(
\frac{C}{\epsilon N} \right)$. This  bound (of the same order as \cite{WanWan17_prox}) present some advantages: unknown $\sigma_f$, evaluation in the last iterate and no uniform bounded gradients assumptions. On the other hand, for a sufficiently large $N = \mathcal{O}(1/\epsilon)$ minibatch size and a proper choice of $\mu_0$, one could perform a constant number of SPG-M iterations. In this case, the mixed stepsize convergence rate provides a link between population risk and empirical risk.

\subsection{Total complexity}

In this section we couple the complexity per iteration estimates 
from Section \ref{sec:com_per_iter} and Section \ref{sec:iteration_com} 
and provide upper bounds on the total complexity of SPG-M.

Often the measure of sample complexity is used for stochastic algorithms, which refers to the entire number of data samples that are used during all iterations of a given algoritmic scheme.
In our case, given the minibatch size $N$ and the total outer SPG-M iterations $T^{out}$, the sample complexity is given by $ NT^{out}$. In the best case $NT^{out}$ is upper bounded by $\mathcal{O}(1/\epsilon)$. 
We consider the dependency on minibatch size $N$ and accuracy $\epsilon$ of highly importance, thus we will present below simplified upper bounds of our estimates. 
\noindent  In Section \ref{sec:com_per_iter}, we analyzed the complexity of a single SPG-M iteration for convex components $h(\cdot;\xi)$ denoted by $T^{in}_v + T^{in}_w$.
Summing the inner effort $T^{in}_v + T^{in}_w$ over the outer number of iterations provided by Theorem \ref{th_iteration_complexity} leads us to the total computational complexity of SPG-M. 
We further derive the total complexity for SPG-M with mixed stepsize policy and use the same notations as in Theorem \ref{th_iteration_complexity}:
\begin{align*}
T^{total}
& = \sum\limits_{i=0}^{T^{out}_m} (T^{in}_v(N) + T^{in}_w(N;\delta)) \\
& \le \sum\limits_{i=0}^{T^{out}_m} \left( \mathcal{O}(Nn) + \mathcal{O}\left( \max \left\{ Nn, N^{3/4}n \frac{\mu_i R_d}{\delta_i^{1/2}} \right\} \right) \right) \\
& \le \sum\limits_{i=0}^{T^{out}_m} \left[ \mathcal{O}(Nn) + \mathcal{O}\left( Nn \mu_i^{1/4} R_d   \right) \right] \\
& \overset{\eqref{mu_bound}}{\le} \mathcal{O}(T^{out}_m Nn) + \mathcal{O}\left( Nn (T^{out}_m)^{3/4} R_d   \right) \\
& \le \mathcal{O}\left(\left(\frac{L_f}{\sigma_f}\log\left(1/\epsilon \right) + \frac{C}{N\epsilon} \right) Nn \right) + \mathcal{O}\left( Nn \left(\frac{L_f}{\sigma_f}\log\left(1/\epsilon \right) + \frac{C}{N\epsilon} \right)^{3/4} R_d   \right) 
\end{align*}
Extracting the dominant terms from the right hand side we finnaly obtain:
\begin{align*}
T^{total} \le \mathcal{O}\left(\frac{L_fNn}{\sigma_f}\log\left(1/\epsilon \right) + \frac{Cn}{\epsilon} \right) + \mathcal{O}\left( N^{1/4}n \left(\frac{C}{\epsilon} \right)^{3/4} R_d   \right).
\end{align*}
The first term $\mathcal{O}\left(\left(\frac{L_f}{\sigma_f}\log\left(1/\epsilon \right) + \frac{C}{N\epsilon} \right) Nn \right)$ is the total cost of the minibatch gradient step $v^k$ and is highly dependent on the conditioning number $\frac{L_f}{\sigma_f}$.
The second term is brought by solving the proximal subproblem by Dual Fast Gradient method
depicting a stronger dependence on $N$ and $\epsilon$ than the first.
Although the complexity order is $\mathcal{O}\left( \frac{Cn}{\epsilon}\right)$, comparable with the optimal performance of single-sample stochastic schemes (with $N = 1$), the above estimate paves the way towards the acceleration techniques based on distributed stochastic iterations. Reduction of the complexity per iteration to $\frac{1}{\tau}(T^{in}_v(N) + T^{in}_w(N;\delta))$, using multiple machines/processors, would guarantee direct improvements in the optimal number of iterations $\mathcal{O}(\frac{Cn}{\tau\epsilon})$. The superiority of distributed variants of SGD schemes for smooth optimization are clear (see \cite{NiuRec:11}), but our results set up the theoretical foundations for distributing the algorithms for the class of proximal gradient algorithms.
 
%\begin{table}\label{tb:compl}
%	\centering
%	\begin{tabular}[ht]{l|c|c|c}	
%		Stepsize policy & Total Complexity & Sample Complexity  \\ \hline
%		Constant step $\mu = \frac{\mu_0}{4L_f}$ & $\mathcal{O}()$ & $\mathcal{O}()$ \\ \hline
%		Nonincreasing step $\mu = \frac{2\mu_0}{k}$ & $\mathcal{O}()$ & $\mathcal{O}()$ \\ \hline
%		Mixed step $\mu = \begin{cases} \frac{\mu_0}{4L_f} & k < T_1 \\ 
%		\frac{2\mu_0}{k}, & T_1 \le k \le T_2 \end{cases}$ & $\mathcal{O}()$ & $\mathcal{O}()$ \\ \hline
%	\end{tabular}
%	\caption{.......}	
%\end{table}
%
%
%
%\vspace{5pt}
%
%
%\noindent \textbf{Polyhedral regularizers.}
%
% 
%
%\begin{table}\label{tb:compl}
%	\centering
%	\begin{tabular}[ht]{l|c|c|c}	
%		Stepsize policy & Total Complexity & Sample Complexity  \\ \hline
%		Constant step $\mu = \frac{\mu_0}{4L_f}$ & $\mathcal{O}()$ & $\mathcal{O}()$ \\ \hline
%		Nonincreasing step $\mu = \frac{2\mu_0}{k}$ & $\mathcal{O}()$ & $\mathcal{O}()$ \\ \hline
%		Mixed step $\mu = \begin{cases} \frac{\mu_0}{4L_f} & k < T_1 \\ 
%		\frac{2\mu_0}{k}, & T_1 \le k \le T_2 \end{cases}$ & $\mathcal{O}()$ & $\mathcal{O}()$ \\ \hline
%	\end{tabular}
%	\caption{.......}	
%\end{table}
%

%%%%%%%%%%%%%%%%%%%%%%%%%%%%%%%%%%%%%%%%%%%%%%%%%%%%

\section{Numerical simulations}

\subsection{Large Scale Support Vector Machine}
% For the  next deadline 
\iffalse
\textcolor{red}{I assumed that we consider only binary classification. If we can consider more complicated tasks: multiple classes, graph-embedding regularization etc. would be more convincing }
\fi

To validate the theoretical implications of the previous sections, we first chose a well known convex optimization problem  in the machine learning field: optimizing a binary Support Vector Machine (SVM) application. To test several metrics and methods, a spam-detection application is chosen using the dataset from \cite{spamDetectionDataset}. The dataset contains about 32000 emails that are either classified as spam or non-spam. This was split in our evaluation in a classical $80\%$ for training and $20\%$ for testing. To build the feature space in a numerical understandable way, three preprocessing steps were made: 

\begin{enumerate}
\item A dictionary of all possible words in the entire dataset and how many times each occured is constructed.
\item The top $200$($=n$ ; the number of features used in our classifier) most-used words indices from the dictionary are stored.
\item Each email entry $i$ then counts how many of each of the $n$ words are in the $i-th$ email's text. Thus, if $X_{i}$ is the numerical entry characteristic to email ${i}$, then $X_{ij}$ contains how many words with index $j$ in the top most used words are in the email's text.
\end{enumerate}

The pseudocode for optimization process is shown below:

\begin{flushleft}
	\quad For $k\geq 0$ compute \\
	1. Choose randomly i.i.d. $N-$tuple $I^k \subset \Omega$ \\
	2. Update: 
	\begin{align*}
	v^{k}  & =  \left(1 - \lambda\mu_k \right) w^k  \\
	u^{k}  & = \arg\max_{u \in [0,1]} \;\;  -\frac{\mu}{2N}\norm{\tilde{X}_{I_k} u}^2_2 +  u^T(e - \tilde{X}_{I_k}^Tv^k) \\
	w^{k+1} & = v^k + \frac{\mu_k}{N} X_{I^k} u^k
	\end{align*}
	3. If the stoppping criterion holds, then \textbf{STOP}, otherwise $k = k+1$.
\end{flushleft}

\iffalse
To find a good value for $u$ to approximate the proximal step we use SGD. The approximative number of iteration bound used is given in \eqref{boundStep}.

\begin{align}
	\frac{L_d}{\sigma_d}\log\left( \frac{R^2}{\delta}\right), \qquad where:  \label{boundStep}
\end{align}

\begin{align*}
	L_d &= \lambda_{\max}\left( \frac{\mu}{N}\tilde{X}_{I_k}^T\tilde{X}_{I_k}\right) \\
	\sigma_d &= \bar{\lambda}_{\min}\left( \frac{\mu}{N}\tilde{X}_{I_k}^T\tilde{X}_{I_k}\right) \\
	R^2 & = N \\
	\delta & = \epsilon 
\end{align*}
\hspace*{4cm} and $\bar{\lambda}_{\min}$ is the least strictly positive eigenvalue.
\fi

To compute the optimal solution $w^*$, we let running the binary SVM state-of-the-art method for the dataset using SGD hinge-loss \cite{spamDetectionPaper} for a long time, until we get the top accuracy of the model (~$93.2\%$). Considering this as a performance baseline, we compare the results of training process efficiency between the $SPG-M$ model versus $SGD$ with mini-batches. The comparison is made w.r.t. three metrics:

\begin{enumerate}
\item \textbf{Accuracy}: how well does the current set of trained weights performs at classification between spam versus non-spam.
\item \textbf{Loss}: the hinge-loss result on the entire set of data.
\item \textbf{Error} (or optimality measure): computes how far is the current trained set of weights ($w^k$ at any step $k$ in time). from the optimal ones, i.e. $\norm{w^k - w^*}^2$.
\end{enumerate}

 The comparative results between the two methods and each of the metrics defined above are shown in Fig. \ref{fig:acc},  \ref{fig:loss}, \ref{fig:error}. These were obtained by averaging several executions on the same machine, each with a different starting point. Overall, the results show the advantage of SPG-M method over SGD: while both methods will converge to the same optimal results after some time, SPG-M is capable of obtaining better results all the three metrics in a shorter time, regardless of the batch size being used.  One interesting observation can be seen for the SGD-Const method results, when the loss metric tends to perfom better (\ref{fig:loss}). This is because of a highly tuned constant learning rate to get the best possible result. However, this is not a robust way  to use in practice.

\begin{figure}[htp] 
    \centering
    \subfloat[batchsize 32]{%
        \includegraphics[width=0.5\textwidth]{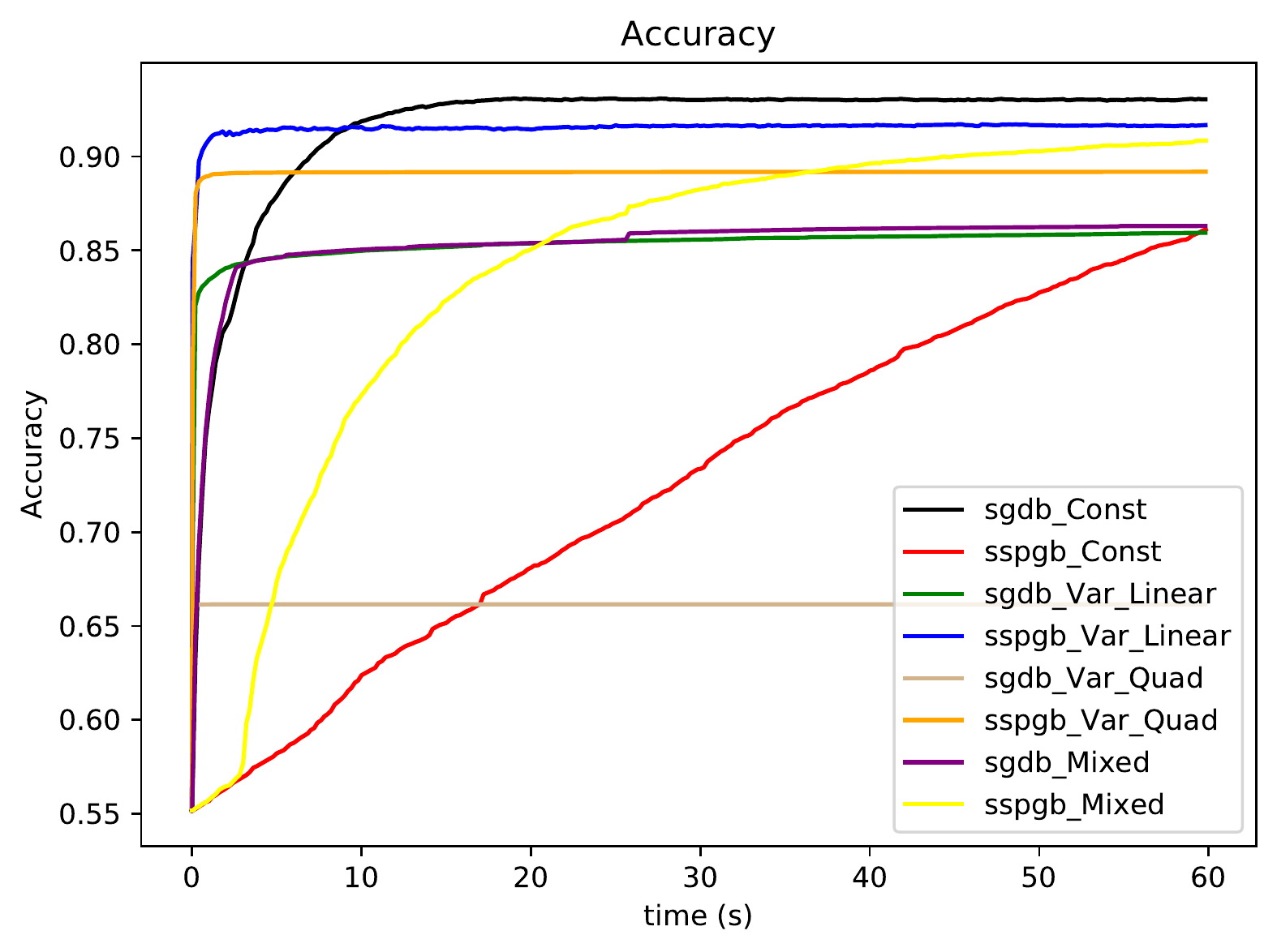}%
        \label{fig:acc32}%
        }%
    \hfill%
    \subfloat[batchsize 128]{%
        \includegraphics[width=0.5\textwidth]{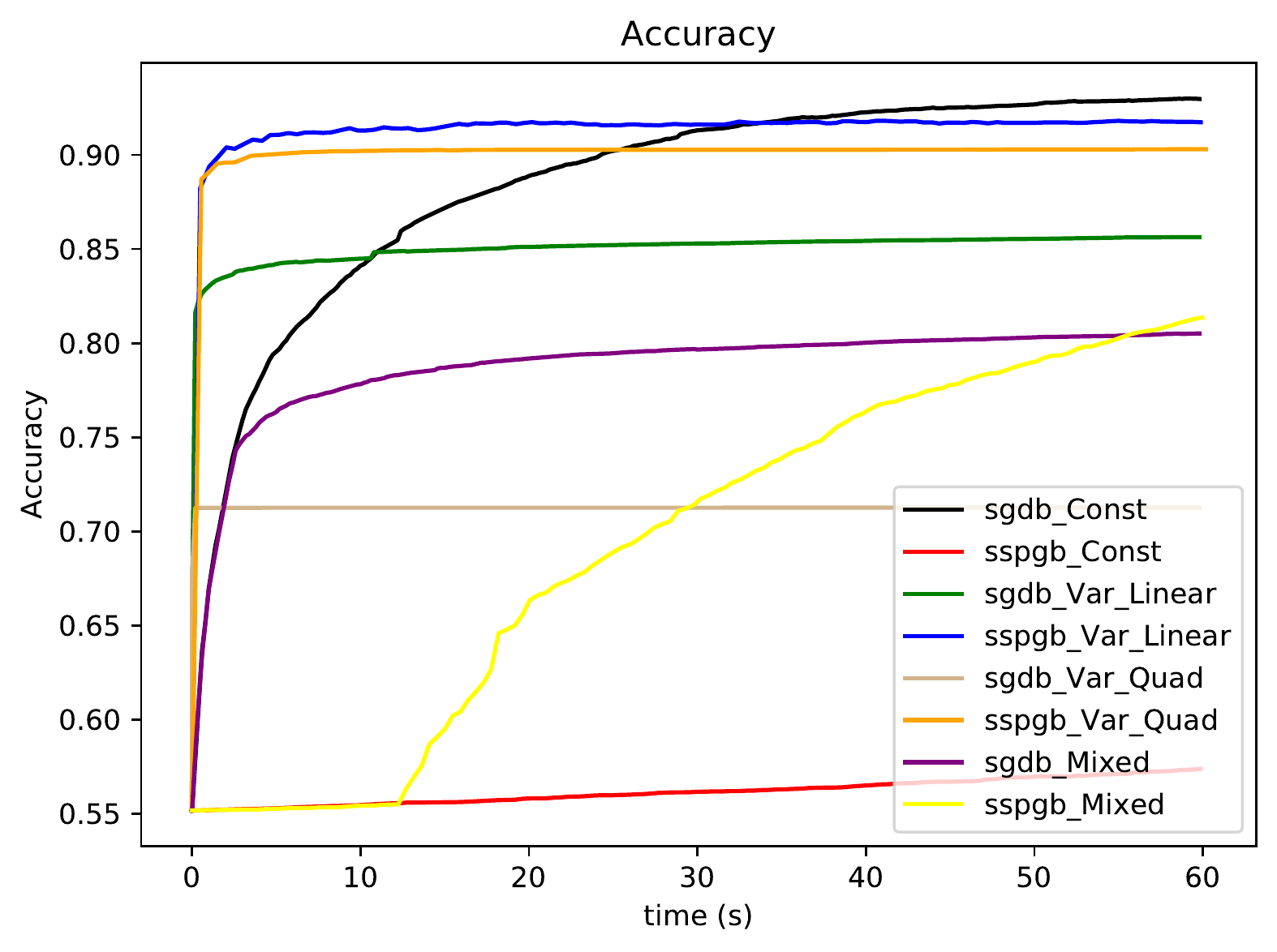}%
        \label{fig:acc128}%
        }%
    \caption{Comparative results between SPG-M and SGD for the \textbf{Accuracy} metric, using different batchsizes and functions for chosing the stepsize.}
   \label{fig:acc}
\end{figure}

\begin{figure}[htp] 
    \centering
    \subfloat[batchsize 32]{%
        \includegraphics[width=0.5\textwidth]{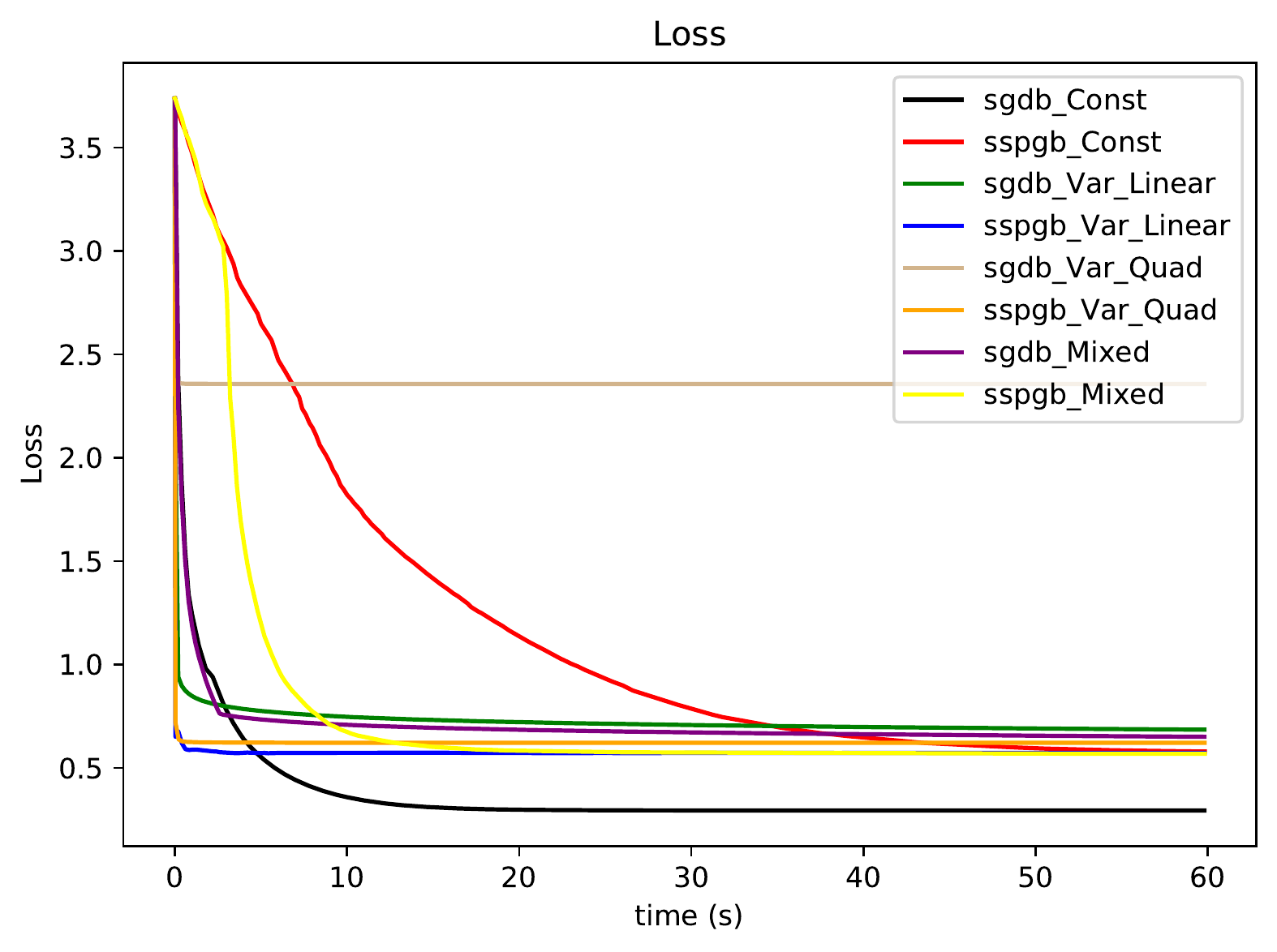}%
        \label{fig:loss32}%
        }%
    \hfill%
    \subfloat[batchsize 128]{%
        \includegraphics[width=0.5\textwidth]{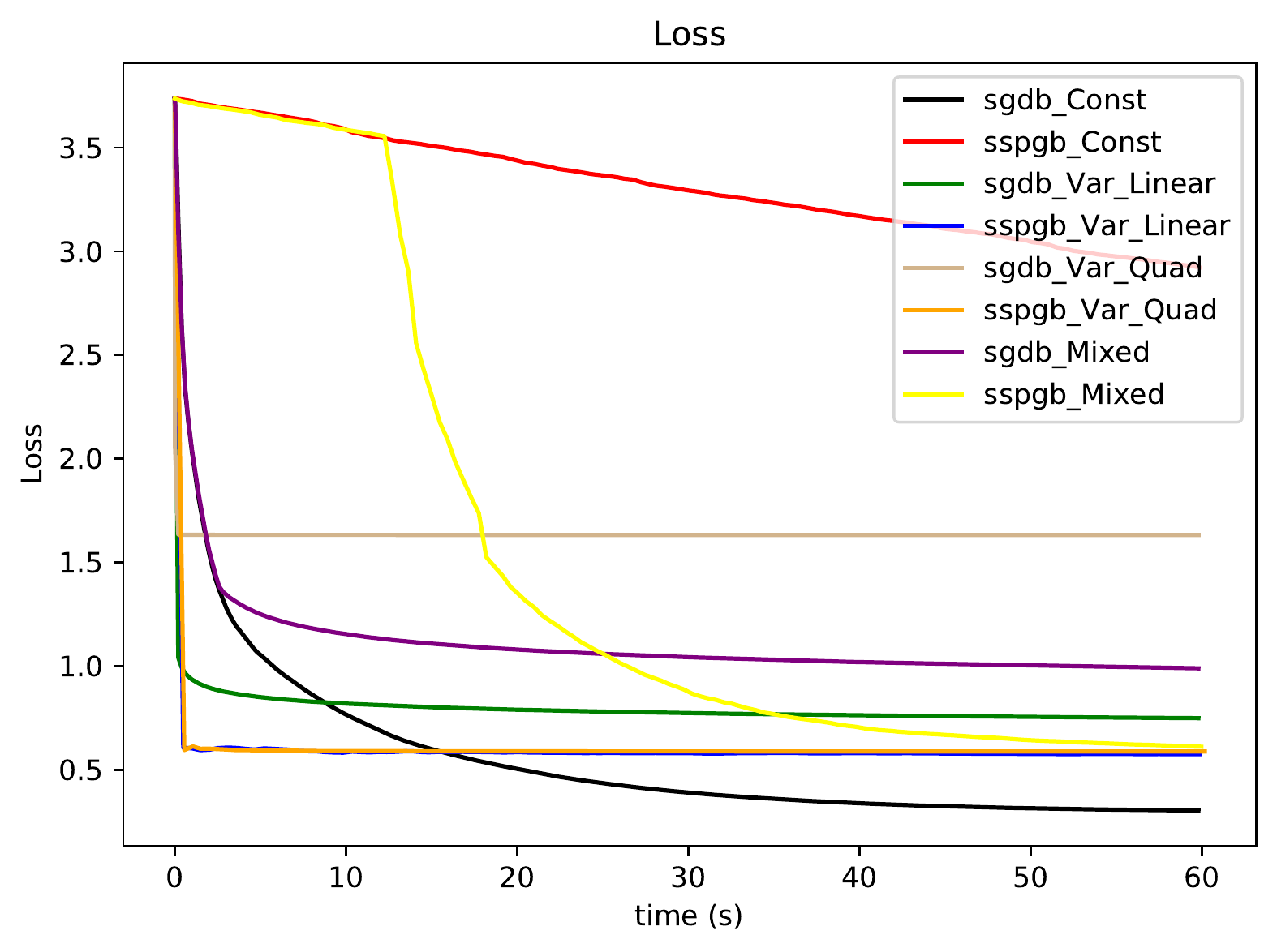}%
        \label{fig:loss128}%
        }%
    \caption{Comparative results between SPG-M and SGD for the \textbf{Loss} metric, using different batchsizes and functions for chosing the stepsize.}
    \label{fig:loss}
\end{figure}

\begin{figure}[htp] 
    \centering
    \subfloat[batchsize 32]{%
        \includegraphics[width=0.5\textwidth]{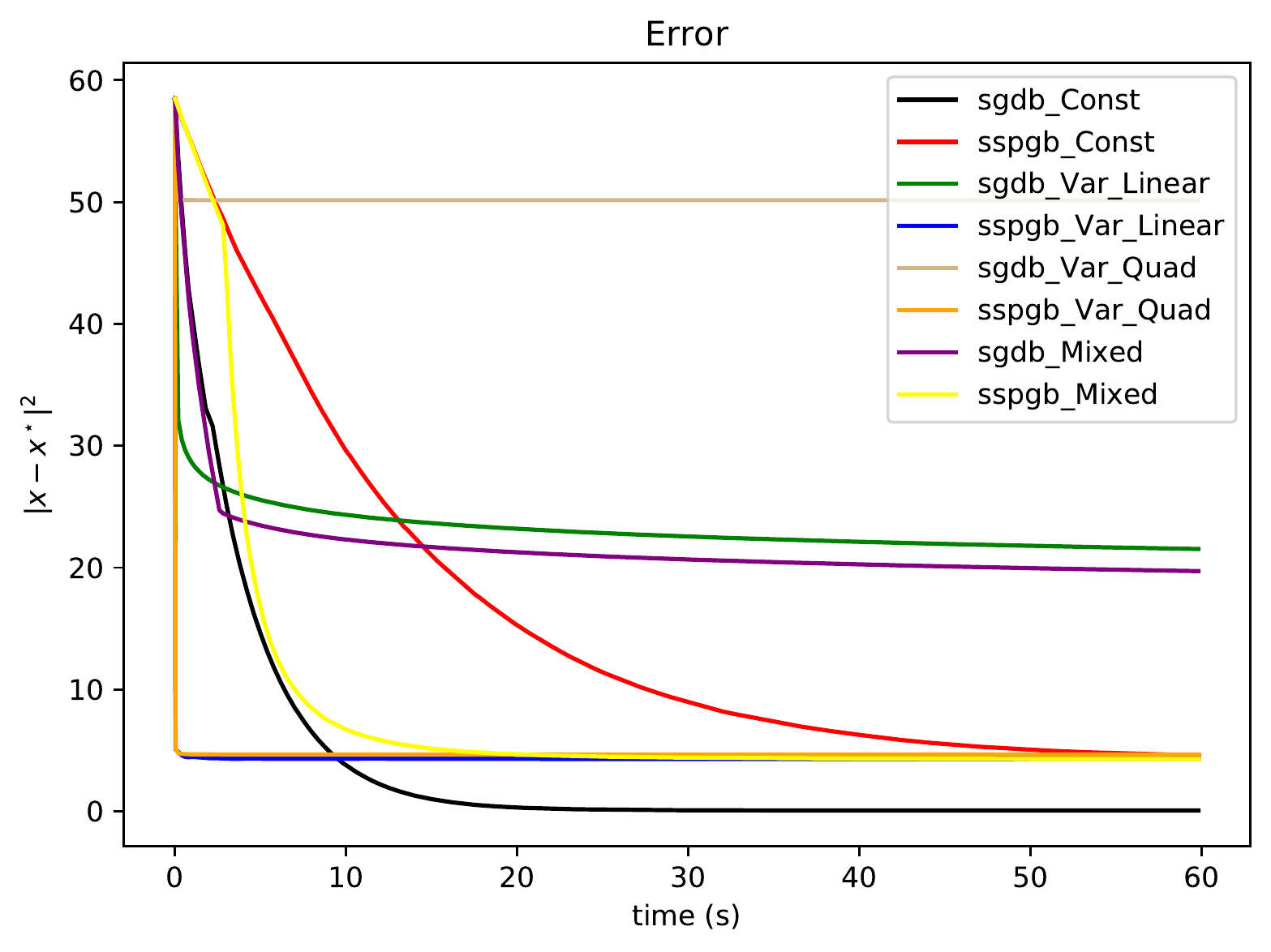}%
        \label{fig:error32}%
        }%
    \hfill%
    \subfloat[batchsize 128]{%
        \includegraphics[width=0.5\textwidth]{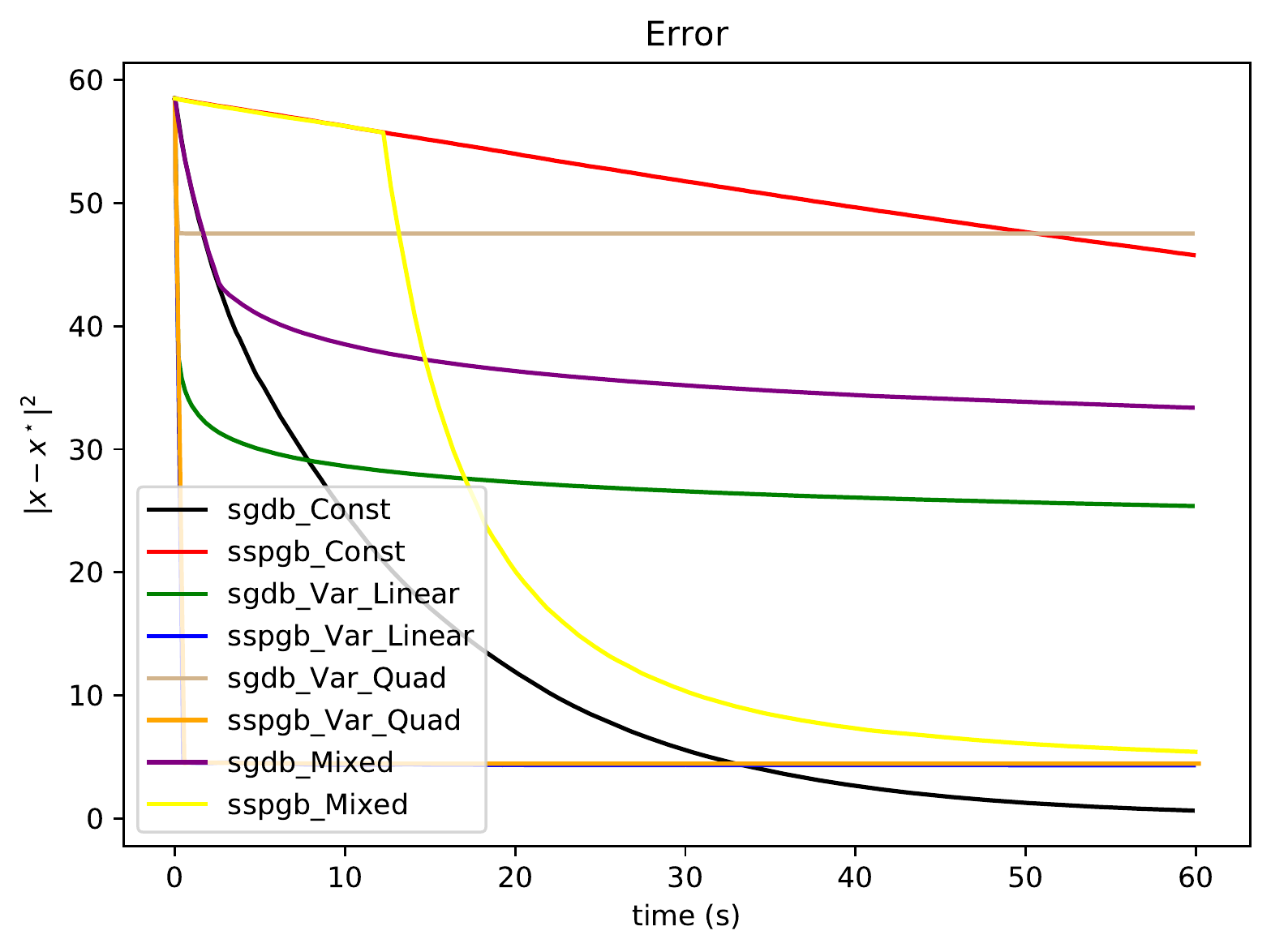}%
        \label{fig:error128}%
        }%
    \caption{Comparative results between SPG-M and SGD for the \textbf{Error} metric, using different batchsizes and functions for chosing the stepsize.}
    \label{fig:error}
\end{figure}

\iffalse
\noindent Directions (find some code in "Simulations/Old experiments"):
\begin{itemize}
\item we will compare with SGD, Minibatch SGD w.r.t. necessary time (seconds) to reach solution; optimality measure: $\norm{w^k - w^*}^2$; we'll run 5-10 times and take the average curve (find code in "Simulations/Old experiments")
\item optimal solution $w^*$ computed with CVXPY (I'll put some code in Dropbox)
\item Variant 1: at each iteration $u^k$ is computed by solving subproblem with CVXPY (probably will result in bad times)
\item Variant 2: at each iteration $u^k$ is computed using a first order method: Gradient Method or Fast Gradient Method or Conjugate Gradient method (doesn't matter for the moment). This inner method will run a fixed number of iterations (say 5000). Will be adjusted later.
\item We target at least 2 plots (SGD, Minibatch SGD vs SPG-M) showing that our method is faster in seconds
\item One figure comparing classification performance (precision, recall etc.)
\end{itemize}
\fi

\newpage
\subsection{Parametric Sparse Representation}

% Our model is:

% \begin{align*}
% 	\min\limits_x \;\; \frac{1}{2m}\|Tx-y\|^2_2 + +
% 	\frac{\alpha}{2} \|x\|^2_2 + 
% 	\frac{\lambda}{p}\|\Delta x\|_1.
% \end{align*}
% where $T \in \rset^{m \times n}, \Delta \in \rset^{p \times n}$.
% Good settings are the following:
% \begin{itemize}
% 	\item[1.] $m >> n, p >>n, \alpha \ge 0$
% 	\item[2.] $m < n, p>>n, \alpha>0$
% \end{itemize}

Given signal $y \in \rset^m$
and
overcomplete dictionary $T \in \rset^{m \times n}$
(whose columns are also called atoms),
sparse representation~\cite{Ela:10}
aims to find the sparse signal $x\in\rset^n$
by projecting $y$ to a much smaller subspace
generated by a subset of the columns from $T$.
Sparse representation is a key ingredient in dictionary learning techniques~\cite{DumIro:18}
and here we focus on the multi-parametric sparse representation model proposed in \cite{StoIro:19}.
Note that this was analyzed in the past in non-minibatch SPG form~\cite{PatIro:20} which we denote with SSPG in the following.
The multi-parametric representation problem is given by:
%-------------------
\be
\begin{aligned}
	& \underset{x}{\min}
	& & \|Tx-y\|^2_2 \\
	& \text{s.t.}
	& & \|\Delta x\|_1 \le \delta,
\end{aligned}
\label{eq:optprob}
\ee
%-------------------
where $T$ and $x$ correspond to the dictionary and, respectively, the resulting sparse representation,
with sparsity being imposed on a scaled subspace $\Delta x$ with $\Delta\in\rset^{p\times n}$.
In pursuit of \eref{problem_intro},
we move to the exact penalty problem $\min_x \frac{1}{2m}\norm{Tx - y}^2_2 + \lambda \norm{\Delta x }_1$. In order to limit the solution norm we further regularize the unconstrained objective using an $\ell_2$ term as follows:
%-------------------
\begin{align*}
	\min\limits_x \;\; \frac{1}{2m}\|Tx-y\|^2_2 +
	\frac{\alpha}{2} \|x\|^2_2 + 
	\frac{\lambda}{p}\|\Delta x\|_1.
\end{align*}
\label{eq:strongprob}
%-------------------
The decomposition which puts the above formulation into model \eqref{problem_intro} consists of:
\be
f(x;\xi) = \frac{1}{2} (T_\xi x-y_\xi)^2_2 + 
\frac{\alpha}{2} \|x\|^2_2
\label{eq:sp_smooth}
\ee
where $T_\xi$ represents line $\xi$ of matrix $T$, and 
\be
h(x; \xi) = \lambda |\Delta_\xi x|.
\label{eq:sp_nonsmooth}
\ee
To compute the SPG-M iteration for the sparse representation problem, we note that 
\begin{align*}
	\text{prox}_{h,\mu}(x;\I) 
	& = \arg\min_z \; \frac{\lambda}{N}\norm{\Delta_I z}_1 + \frac{1}{2\mu}\norm{z-x}^2.
\end{align*}
Equivalently, once we find dual vector
\begin{align*}
	z^{k}  & = \arg\min_{-1 \le z \le 1}  \frac{\mu\lambda^2}{2N^2} \norm{\Delta_I^Tz}^2 - \frac{\lambda}{N}z^T\Delta_I t 
\end{align*}
then we can easily compute $\text{prox}_{h,\mu}(x;I) = x - \frac{\mu \lambda}{N} \Delta_I^T z$.
We are ready to formulate the resulting particular variant of SPG-M.
\begin{flushleft}
	\textbf{SPG-M - Sparse Representation  (SPGM-SR)}: 
	\quad For $k\geq 0$ compute \\
	1. Choose randomly i.i.d. $N-$tuple $I^k \subset \Omega$ \\
	2. Update: 
	\begin{align*}
	y^{k}  & =  \left[ I_n - \frac{\mu_k}{N} \left(T_{I_k}^T T_{I_k} + N\alpha I_n \right)\right] x^k + \frac{\mu_k}{N} T_{I_k}^T y_{I_k}\\
	z^{k}  & = \arg\min_{-1 \le z \le 1}  \frac{\mu\lambda^2}{2N^2} \norm{\Delta_I^Tz}^2 - \frac{\lambda}{N}z^T\Delta_I t \\
	x^{k+1} & = y^k - \frac{\mu_k\lambda}{N} \Delta_I^T z^k
	\end{align*}
	3. If the stoppping criterion holds, then \textbf{STOP}, otherwise $k = k+1$.
\end{flushleft}
% %---------------------------
% \pdffig{spgm-gensr-linstep2}
%   {SPG-M with Variable stepsize ($\alpha=0.2$)}
%   {fig:spgm-gensr-linstep2}
% %---------------------------
% %---------------------------
% \pdffig{spgm-gensr-mixedstep2}
%   {SPG-M with Mixed stepsize ($\alpha=0.2$)}
%   {fig:spgm-gensr-mixedstep2}
% %---------------------------

%---------------------------
\begin{figure}[t]
    \centering
    \begin{minipage}{0.5\textwidth}
	\centering
	\includegraphics[width=\textwidth]{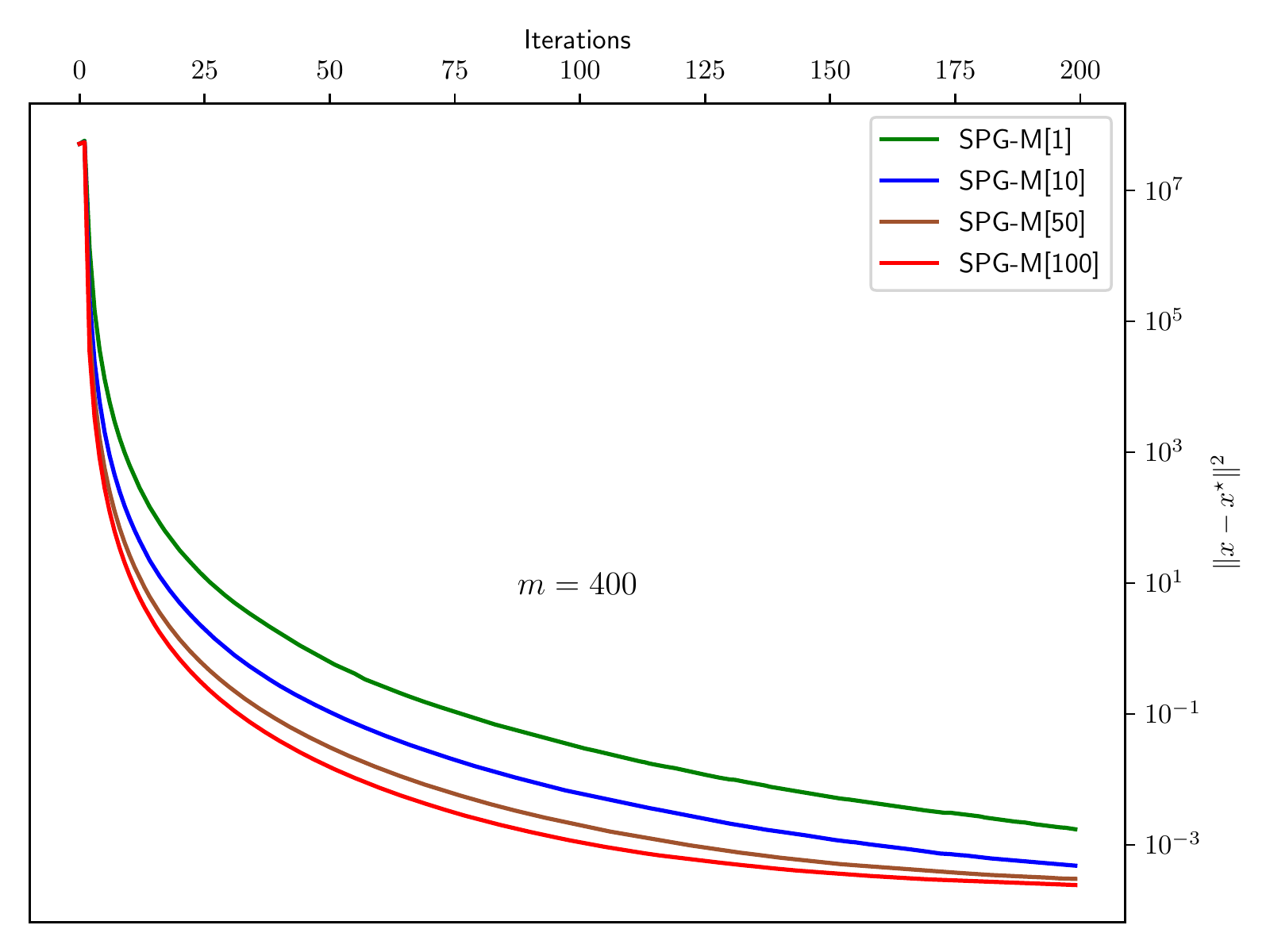}
	\caption{Variable stepsize ($\alpha=0.2$)}
	\label{fig:spgm-gensr-linstep2}
	\end{minipage}\hfill
	\begin{minipage}{0.5\textwidth}
	\centering
	\includegraphics[width=\textwidth]{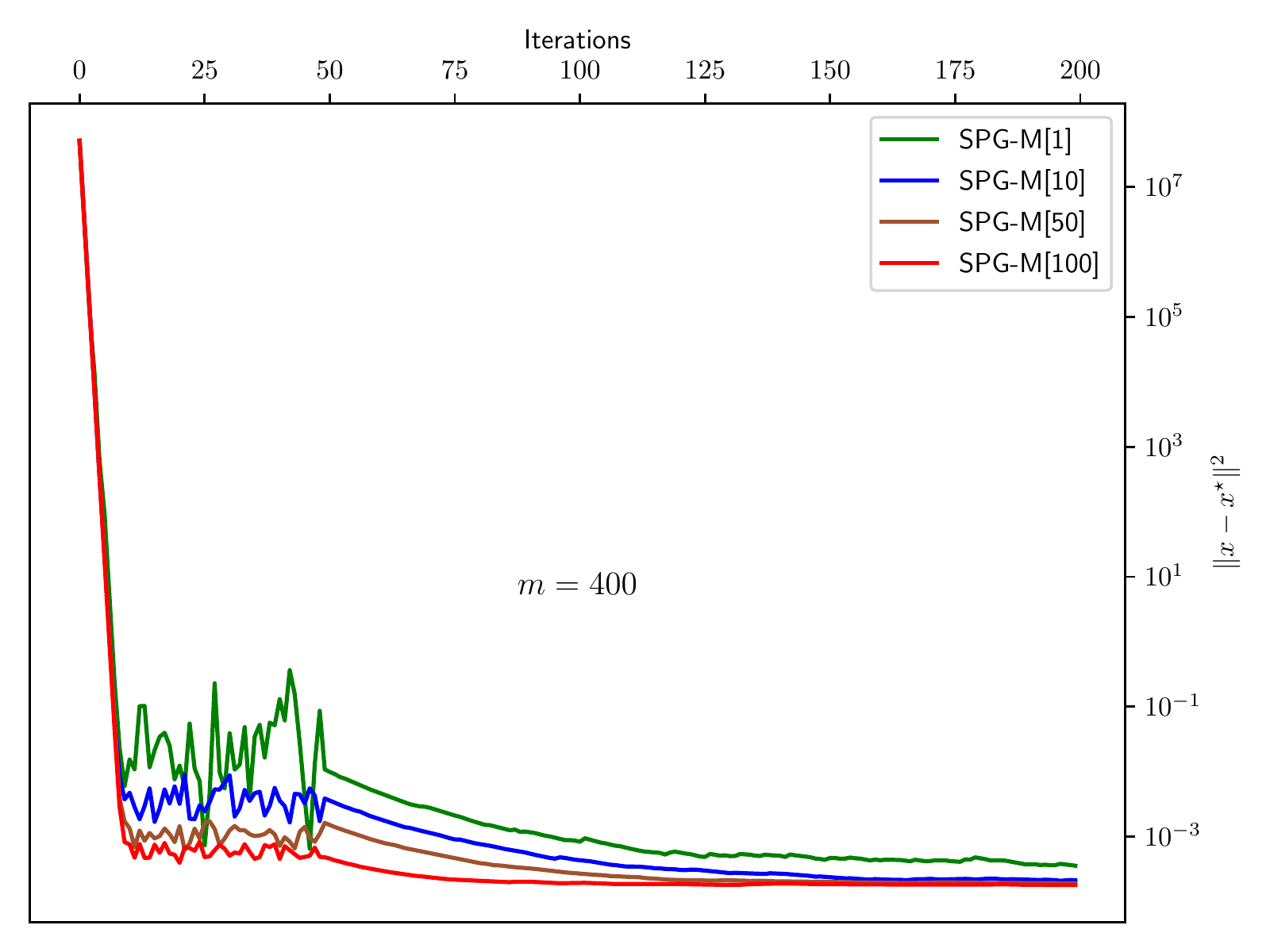}
	\caption{Mixed stepsize ($\alpha=0.2$)}
	\label{fig:spgm-gensr-mixedstep2}
	\end{minipage}
\end{figure}
%---------------------------
%---------------------------
\begin{figure}[t]
    \centering
    \begin{minipage}{0.5\textwidth}
	\centering
	\includegraphics[width=\textwidth]{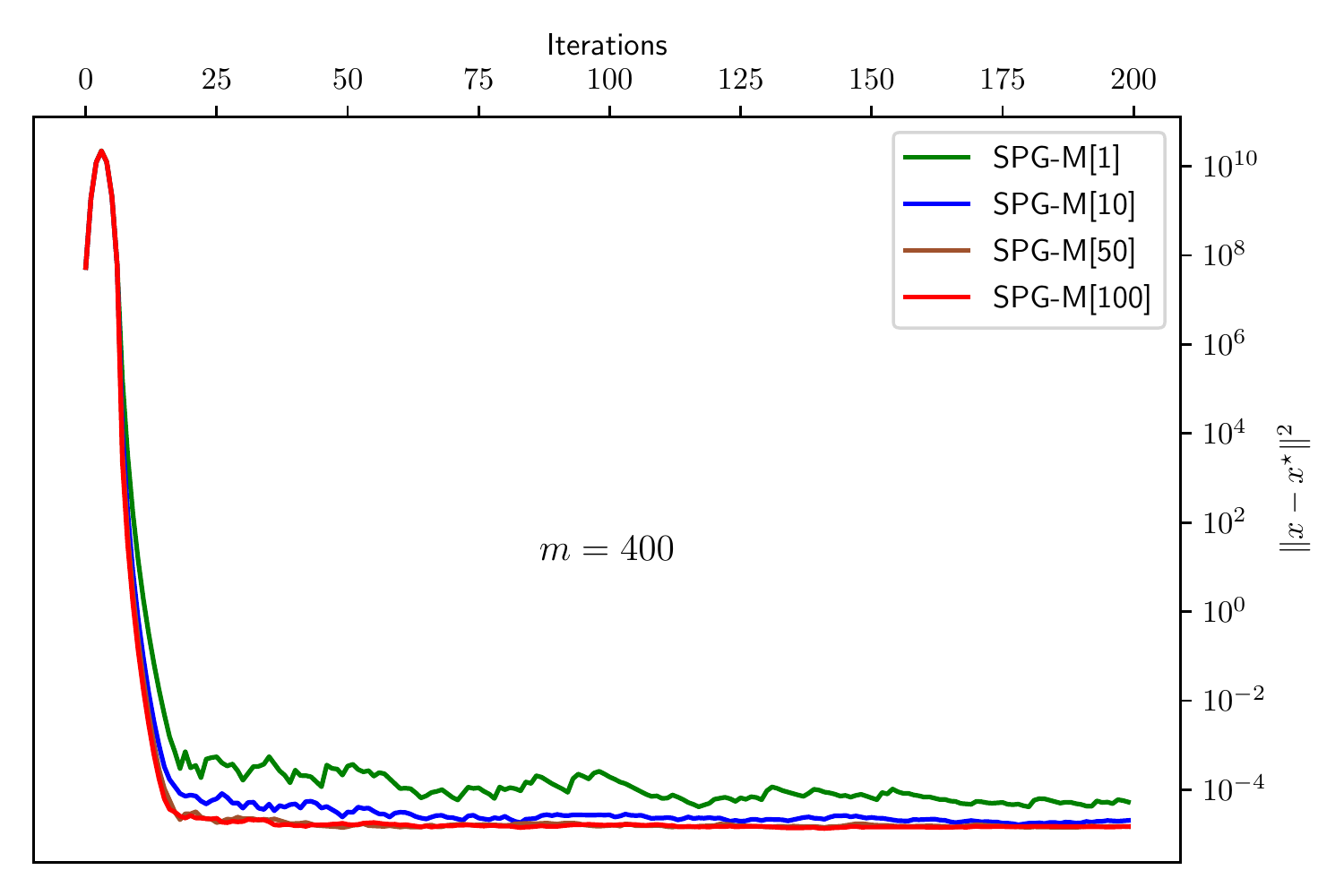}
	\caption{Variable stepsize ($\alpha=0.7$)}
	\label{fig:spgm-gensr-linstep3}
	\end{minipage}\hfill
	\begin{minipage}{0.5\textwidth}
	\centering
	\includegraphics[width=\textwidth]{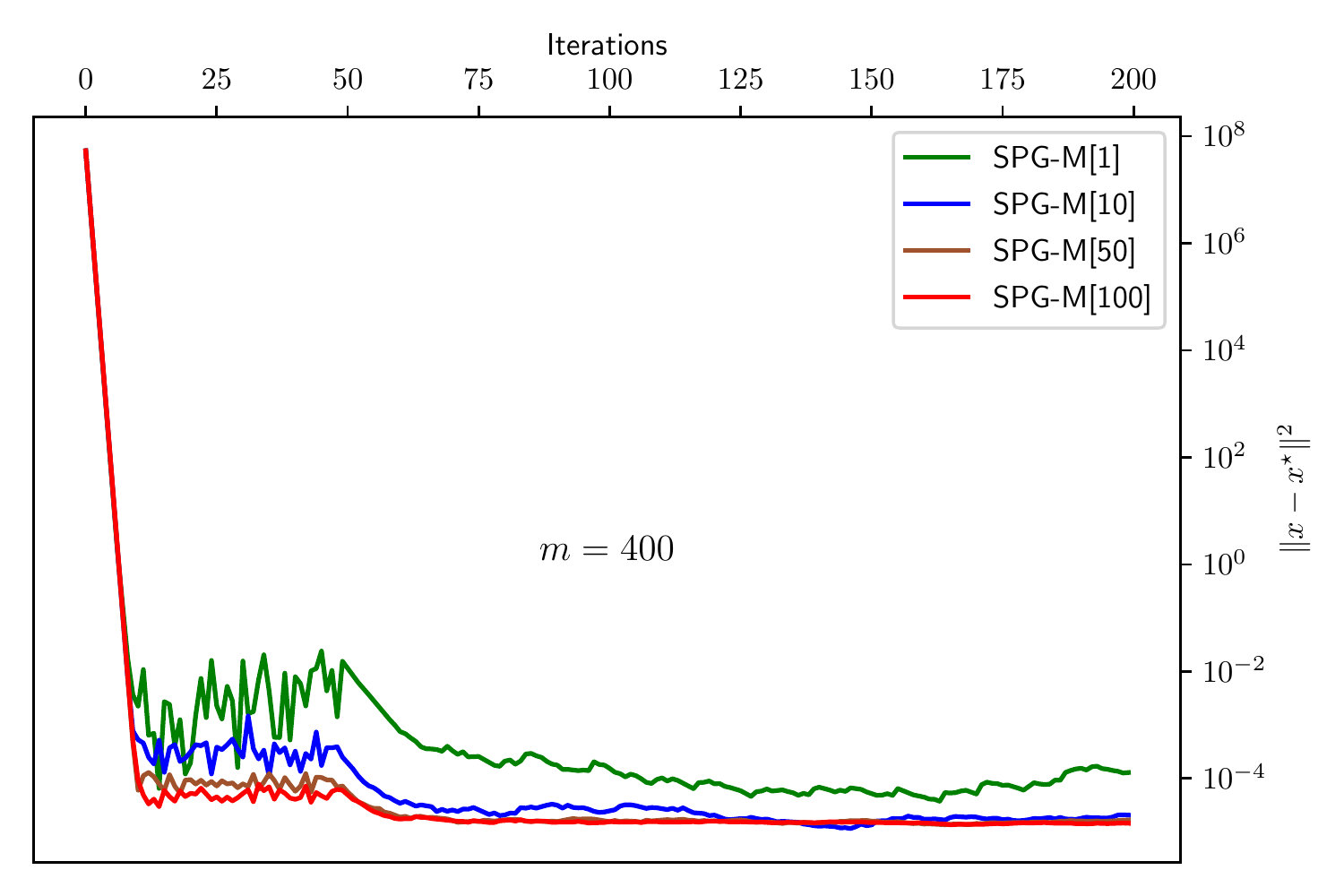}
	\caption{Mixed stepsize ($\alpha=0.7$)}
	\label{fig:spgm-gensr-mixedstep3}
	\end{minipage}
\end{figure}
%---------------------------

We proceed with numerical experiments that depict
SPG-M with various batch sizes $N$
and compare them with SSPG~\cite{PatIro:20},
which is equivalent to SPG-M where $N=1$.
In our experiments we use batches of $N=\{1, 10, 50, 100\}$ samples
from a population of $m=400$
using dictionaries with $n=200$ atoms.
We fix $\lambda=5\dot 10^{-4}$ and stop the algorithms
when the euclidean distance between current solution $x$ 
and the optimum $x^{\star}$ is less than $\varepsilon=10^{-3}$.
CVX is used to determine $x^\star$ within a $\varepsilon=10^{-6}$ margin.

%Past experiments~\cite{PatIro:20} indicated that larger values of $\alpha$ lead to a slow-down in convergence.
Here we choose two scenarios:
one where $\alpha=0.2$ and all methods provide adequate performance,
depicted in Figures \ref{fig:spgm-gensr-linstep2} and \ref{fig:spgm-gensr-mixedstep2},
and a second where $\alpha=0.7$ is larger and stomps performance as can be seen in the first ten iterations
of Figures \ref{fig:spgm-gensr-linstep3} and \ref{fig:spgm-gensr-mixedstep3}.
In these figures we can also observe that the mixed stepsize indeed provides much better convergence rate and that the multibatch algorithm is always ahead of the single batch SSPG version. 
% n_nonzero_coefs = 3     # sparsity (s)
% n_samples = 1           # number of signals (N)
% alpha = 0.7    # l2 weight
% lam = .0005     # l1 weight
% eps = 1e-3      # precision
% batchSizesToTry = [10, 50, 100]
% n_features = n_components * 2

We continue our investigation by adapting the minibatch stochastic gradient descent method to the parametric sparse representation problem which leads to the following algorithm:

\begin{flushleft}
	\textbf{SGD-M - Sparse Representation  (SGDM-SR)}:
	\quad For $k\geq 0$ compute \\
	1. Choose randomly i.i.d. $N-$tuple $I^k \subset \Omega$ \\
	2. Update: 
	\begin{align*}
	x^{k+1} & = x^k - \frac{\mu_k}{N} \left(T_{I_k}^T T_{I_k} + N\alpha I_n \right) x^k + \frac{\mu_k}{N} T_{I_k}^T y_{I_k} - \frac{\mu_k\lambda}{N} \sum\limits_{i \in I^k} \text{sgn}(\Delta_i x^k) \Delta_i^T,
	\end{align*}
	where $\text{sgn}(\Delta_i x) = \begin{cases} +1, & \Delta_i x > 0 \\ -1, & \Delta_i x< 0 \\ 0, & \Delta_i x = 0\end{cases}$
	
	3. If the stoppping criterion holds, then \textbf{STOP}, otherwise $k = k+1$.
\end{flushleft}

When applying SGDM-SR on the same initial data as our experiment with $\alpha=0.7$ with the same parametrization and batch sizes we obtain the results depicted in Figure~\ref{fig:spgm-gensr-sgd}.
Here the non-minibatch version of SGD is clearly less performant than SPG-M,
but what is most interesting is that the minibatch version
for all batch sizes behaves identically and
takes 100 iterations to recover and reach the optimum around iteration 150.
% %---------------------------
% \pdffig{spgm-gensr-linstep3}
%   {SPG-M with Variable stepsize ($\alpha=0.7$)}
%   {fig:spgm-gensr-linstep3}
% %---------------------------
% %---------------------------
% \pdffig{spgm-gensr-mixedstep3}
%   {SPG-M with Mixed stepsize ($\alpha=0.7$)}
%   {fig:spgm-gensr-mixedstep3}
% %---------------------------
%---------------------------
\pdffig{spgm-gensr-linstep-sgd}
  {SPG-M and SGD with Variable stepsize. The other batch sizes were identical with N=100 for SGD. ($\alpha=0.7$)}
  {fig:spgm-gensr-sgd}

\section{Conclusion}

\noindent In this chapter we presented preliminary guarantees for minibathc stochastic proximal gradient schemes, which extend some well-known schemes in the literature. For future work, would be interesting to analyze the behaviour of SPG-M scheme on nonconvex learning models.

We provided significant improvements in iteration complexity that future work can further reduce using distributed and parallelism techniques, as hinted by the distributed variants of SGD schemes~\cite{NiuRec:11}.

%%%%%%%%%%%%%%%%%%%%%%%%%%%%%%%%%%%%%%%%%%%%

\bibliographystyle{plain}
\bibliography{sppg}

%\begin{lemma}\label{lemma_recurence}
%	Let Assumption \ref{assump_basic} hold and $\mu_k \le \frac{1}{2L_f}$. Then the sequence $\{x^k\}_{k \ge 0}$ generated by SPG-M satisfies:
%	\begin{align*}
%	\mathbb{E}[\norm{x^{k+1} - w^*}^2] \le  & (1 - \sigma_f \mu_k)\mathbb{E}[\norm{x^k-w^*}^2] \\ 
%	& +  2\mu_k \mathbb{E}\left[ F(w^*) \!-\! F(x^{k+1};I_k) \!-\! \frac{1}{4\mu_k} \norm{x^{k+1} \!-\! x^k}^2 \right].
%	\end{align*}
%	%%%%%%%%%%%%%%%%%%%%%%%%%%%%%%%%%%%%%5
%\end{lemma}

%%%%%%%%%%%%%%%%%%%%%%%%%%%%%%%%%%%%%%%%%%%%%%%%%%%%%%%%%%%%%%%%%%%%%%%%%%%%5

%\begin{lemma}\label{lemma_auxres}
%	Given $\mu > 0$, let Assumption \ref{assump_basic} hold. Then $F(x) = f(x) + h(x)$ satisfies the following relations: given $k \ge 0$
%	\begin{enumerate}
%		\item[$(i)$] $ \mathbb{E}\left[F(x^{k+1};\xi_k) - F^* \!+\! \frac{1}{4\mu_k}\norm{x^{k+1} - x^k}^2 \right] \ge - \mu_k \mathbb{E} \left[  \norm{g_F(w^*;\xi)}^2\right].$  	
%		\item[$(ii)$] Let $h(\cdot;\xi) = \mathbb{I}_{X_{\xi}}(\cdot)$ and $\{X_{\xi}\}_{\xi \in \Omega}$ satisfying linear regularity with constant $\kappa$. 
%		\begin{align*}
%		& \mathbb{E}\left[F(x^{k+1};\xi_k) - F^* + \frac{1}{4\mu_k}\norm{x^{k+1} - x^k}^2 \right] \\ 
%		& \hspace{2cm} \ge - \mu_k\mathbb{E}\left[\norm{\nabla f(w^*;\xi)}^2 \right] -  \frac{2\mu_k}{\kappa}\norm{\nabla f(w^*)}^2 + \frac{\kappa}{8\mu_k}\mathbb{E}[\text{dist}_X^2(x^k)]
%		\end{align*}
%	\end{enumerate}
%\end{lemma}
	
\end{document}